\definecolor{LightCyan}{rgb}{0.8, 0.9, 1}
\newtheorem{Condition}[theorem]{Condition}
\newtheorem{Desideratum}[theorem]{Desideratum}
\title{\huge Towards a Principled Muon under $\mup$: Ensuring Spectral Conditions throughout Training}
\author{John Zhao}
\date{}
\newcommand{\mup}{\mu\mathsf{P}}
\newcommand{\algname}{Muon++}
\begin{document}

\maketitle

\begin{abstract}
     The $\mu$-parameterization ($\mup$) provides a principled foundation for large language model (LLM) training by prescribing width-independent learning dynamics, which in turn enables predictable scaling behavior and robust hyperparameter transfer across model sizes. A central requirement of $\mup$ is the satisfaction of certain spectral conditions on weight matrices, which ensure consistent feature learning and optimization behavior as model width grows. While these conditions are well understood in theory, guaranteeing their validity in practical training for matrix-based optimizers such as Muon is still under studied. Existing works that study Muon under $\mup$ exhibit important limitations: they either do not ensure that the spectral conditions hold throughout the entire training horizon, or require repeated spectral normalization (or Newton–Schulz iterations) applied to both weights and updates, leading to significant computational overhead and reduced practicality. In this work, we show how to reliably guarantee the spectral conditions required by $\mup$ for Muon during the entire training process. Our key insight is that for moderately large models, maintaining spectral control at the level of optimizer updates alone is sufficient to preserve $\mup$-compatible scaling, eliminating the need for explicit spectral normalization of the weights. Based on this principle, we develop a variant of Muon, namely Muon++, that satisfies spectral condition throughout the training process. Our results bridge the gap between the theoretical promises of $\mup$ and the practical deployment of matrix-based optimizers in long-horizon training. We also take the first step towards an adaptive spectral condition by incorporating data-dependent effects, making it better suited for long-horizon LLM training.
\end{abstract}

\section{Introduction}
Recently a lot of works have demonstrated the potential of scaling up large language models (LLMs) with the guidance of scaling laws~\citep{zhao2023survey,filatov2025optimal,fan2025robust}. Traditional empirical scaling laws usually model the optimal hyper-parameters, such as learning rate, compute budget and model size, through extrapolation from considerable numbers of grid search experiments~\citep{kaplan2020scaling,hoffmann2022training,krajewski2024scaling,bi2024deepseek,li2025predictable}. Such an empirical approach suffers from high resource consumption and high sensitivity to model structures.

To model hyper-parameter transfer and ensure feature learning,~\citet{yang2021tensor} proposed Maximal-update Parameterization ($\mup$), which analyzes the changes and invariants within dynamics of model parameters. In detail, under certain conditions on the weight initialization and learning rate multiplier, the optimal learning rate is transferrable with given optimizers across large widths~\citep{yang2022tensorv} and depths~\citep{TensorProgramVI}. In addition,~\citet{yang2023spectral} proposed that, it is sufficient to attain such transferability with the condition for spectral scaling when the spectral norms of the weight matrices with shape $(n_{\ell}, n_{\ell-1})$ and their gradient updates are both $\Theta(\sqrt{{n_\ell}  / {n_{\ell-1}}})$. 

However, turning spectral conditions into practical guarantees during large-scale training, particularly for matrix-based optimizers that directly manipulate the geometry and spectrum of parameter updates, remains under studied. This gap is especially salient for Muon~\citep{muon}, a recently proposed matrix-level optimizer that has been applied in the training of a lot of LLMs~\citep{team2025kimi,Zai2025GLM45}. Despite its promise as a structured alternative to elementwise adaptive methods such as AdamW~\citep{Adamw}, it is unclear under what algorithmic choices Muon can be made genuinely compatible with $\mup$ in realistic training regimes. Several initial attempts~\citep{yang2023spectral,bernstein2025deriving,shah2025practical,kexuefm-11416,su2025high,Zhihu2025} have explored the design of Muon under $\mup$. However, these approaches exhibit important limitations. On one hand, some approaches~\citep{bernstein2025deriving,kexuefm-11416} only enforce the $\mup$-required spectral control in a limited sense (e.g., near-initialization or for a single update), without guaranteeing that the spectral conditions persist throughout the entire training horizon. On the other hand, approaches that aim to maintain these conditions over time~\citep{yang2023spectral,su2025high,Zhihu2025} often resort to repeated spectral normalization applied to both the weights and the updates, which introduces substantial computational overhead and undermines practicality at LLM scale. 

The aforementioned limitations motivate our work. Our goal is to develop a principled and practical formulation of Muon under $\mup$ that reliably maintains the required spectral conditions throughout the entire optimization process, without incurring prohibitive computational overhead.
Our key insight is that for moderately large models, enforcing spectral control at the level of optimizer updates alone is sufficient to preserve $\mup$-compatible scaling behavior. This eliminates the need for explicit spectral normalization applied to the weights themselves, which are a major source of inefficiency in prior approaches.
Building on this insight, we introduce a variant of Muon, namely \algname{}, a modified Muon optimizer that provably satisfies the $\mup$-required spectral conditions throughout training. Our approach bridges the gap between the theoretical guarantees of $\mup$ and the practical deployment of matrix-based optimizers in LLM training.

Our contributions are summarized as follows:
\begin{itemize}
\item We show that maintaining spectral control at the level of optimizer updates is sufficient for preserving $\mup$-compatible scaling, thereby avoiding explicit spectral normalization of model weights.
\item Based on these principles, we propose \algname{}, a practical modification of the Muon optimizer that achieves stable and transferable training dynamics with affordable computational cost.
\item We also discuss how to reliably guarantee the spectral conditions required by $\mup$ for Muon across the entire training trajectory, even when the model width is very large.
\item Finally, we introduce an adaptive spectral condition that incorporates data-dependent effects, making Muon++ better suited for long-horizon LLM training.
\end{itemize}

%However, while the spectral conditions imposed by $\mup$ successfully regulate the spectral norms of model weights and corresponding single-step update across varying network widths, this guarantee does not necessarily extend throughout the entire training process. This limitation arises from the fact that $\mup$ is data-independent, whereas the training process is inherently data-dependent. Therefore, explicit, hard constraints on the optimization dynamics is necessary for stable training process. 

%These limitations motivate our work: we seek a principled Muon formulation under $\mup$ that maintains the required spectral conditions across training while remaining computationally affordable.
% In detail, we equip spectral conditions with temporal stability and adaptivity. And based on such conditions, we propose \algname{}, a novel refinement of Muon~\citep{muon}. Our contributions are:
% \begin{itemize}
%     \item We propose the condition for spectral scaling with temporal stability, ensuring the weight spectral norm to adapt to a desired magnitude.
%     \item Based on the previous analysis, we propose a new optimization algorithm, \algname{}, that adapts Muon optimizer with the proposed condition to achieve better transferability.
% \end{itemize}
\paragraph{Notation} Bold lowercase letters are reserved for vectors. Matrices are denoted by bold capital letters $\Wb, \Xb$, etc. $\dotp{\Ab}{\Bb} \coloneqq \tr(\Ab^\top \Bb)$ for any $\Ab, \Bb$ with compatible dimensions. For a matrix $\Mb $, $\|\Mb\|=\max_{\xb\ne\mathbf{0}}{\|\Mb\xb\|_2} / {\|\xb\|_2}$ denotes its spectral norm, $\|\Mb\|_F=\sqrt{\dotp{\Mb}{\Mb}}$ denotes the Frobenius norm, and $\|\Mb\|_*=\tr(\sqrt{\Mb^\top\Mb})=\max_{\|\Wb\|\le 1}\langle \Mb,\Wb\rangle$ is the nuclear norm. We use $a \wedge b$ (resp. $a \vee b$) to denote the minimum (resp. maximum) of $a$ and $b$. We use standard asymptotic notations including $o(\cdot), O(\cdot), \Omega(\cdot), \Theta(\cdot)$; and write $a_n \ll b_n$ (resp. $a_n \gg b_n$) for $a_n = o(b_n)$ (resp. $b_n = o(a_n)$), $a_n \lesssim b_n$ (resp. $a_n \gtrsim b_n$) for $a_n = O(b_n)$ (resp. $a_n = \Omega(b_n)$), and $a_n \asymp b_n$ for $a_n = \Theta(b_n)$. We put $X_n \stackrel{\mathrm{a.s.}}{\longrightarrow} X$ for random variables $X$ and $\{X_n\}_{n\geq1}$ if $\PP(X_n \to X) = 1$.

\section{Related Work}

\noindent\textbf{Matrix-based optimization methods.} To bypass the prohibitive computational cost of the Hessian computation in Newton methods, previous work largely focused on approximate preconditioners, such as diagonal~\citep{adagrad,adam,Adamw} and sketched~\citep{erdogdu2015convergence,gonen2015faster} approximations. The Shampoo optimizer~\citep{gupta2018shampoo} marked a significant advance by practically leveraging matrix-level information, which inspires several follow-ups, including Muon~\citep{jordan2024muon}. Muon is proposed by applying a Newton-Schulz iteration to the momentum. Subsequently~\citet{liu2025muon} improved Muon by incorporating weight decay and aligning the update magnitudes of AdamW and Muon to achieve a better performance. Following \citet{liu2025muon}, many practical matrix-level methods have been proposed.~\citet{liu2025mars} introduced variance reduction techniques to matrix-level optimizers. PolarGrad~\citep{lau2025polargrad} advances Muon using a preconditioned framework based on the polar decomposition of momentum. Scion~\citep{pethick2025training} and Gluon~\citep{riabinin2025gluon} generalize Muon under the linear minimization oracle (LMO) framework. In addition to the development of matrix-level optimizers, there is a line of works on the parameterization and hyper-parameter transfer of such approaches.~\citet{ishikawaparameterization} introduced the method for hyper-parameter transfer of methods like K-FAC~\citep{martens2015optimizing} and Shampoo.~\citet{shah2025practical} studied hyper-parameter transfer of Muon optimizer empirically. And~\citet{filatov2025optimal} discussed the optimal hyper-parameter transfer of Scion in terms of spectral condition in both model size and data scaling directions.

\noindent\textbf{Hyperparameter transfer.} Extensive research has focused on accelerating hyperparameter search for neural networks~\citep{snoek2012practical, snoek2015scalable, jamieson2016non, akiba2019optuna} and exploring methods trying to transfer optimal hyperparameters across different tasks or datasets \citep{horvath2021hyperparameter, perrone2018scalable,yogatama2014efficient}. A significant advancement in this area is the Maximal Update Parametrization ($\mup$), proposed by~\citet{TensorProgramVI}, which builds upon Standard Parametrization (SP) techniques like Xavier~\citep{glorot2010understanding} and Kaiming initialization~\citep{he2015delving}. $\mup$ framework unifies previous methods, including SP, Neural Tangent Kernel (NTK) parametrization~\citep{jacot2018neural}, and Mean Field parametrization~\citep{chizat2018global, mei2018mean, sirignano2020mean, rotskoff2022trainability}, while enabling feature learning that generalizes to infinite-width conditions. This foundation led to $\mu$\textit{Transfer}~\citep{yang2022tensor}, which enables zero-shot hyperparameter transfer across models with different hidden sizes. This transferability was generalized across different architectures and optimizers~\citep{yang2023tensor}, such as SGD, Adagrad~\citep{duchi2011adaptive}, and Adam~\citep{adam}, and the theory was extended to Depth-$\mup$ \citep{TensorProgramVI} and reformularized from a spectral norm perspective~\citep{yang2023spectral}. Recently, researchers have worked to generalize $\mup$~\citep{blakeu2025, meta2024llama4, haas2024effective, dey2024sparse, hajjar2024training} and apply its principles to other optimizers~\citep{ishikawaparameterization, everett2024scaling}. Notably, for the Muon optimizer, \citet{shah2025practical} attempted to directly apply the $\mu$Transfer theory developed for AdamW~\citep{Adamw}. And some previous works attempt to examine the spectral conditions of Muon under $\mup$, including~\citet{kexuefm-11416} and~\citet{pethick2025training}. However, these approaches can not satisfy the spectral conditions of $\mup$ simultaneously.

\section{Preliminaries}

\subsection{Maximal update parametrization ($\mup$)}
% \section{What is wrong with soft constraints (aka $\mup$)?}
 
$\mup$~\citep{yang2021tensor} refers to scaling paradigms for weight initialization and optimizer configurations such that for any \emph{fixed} training step $t$ and \emph{fixed} model depth $L$, as the model width $n \to \infty$, $\forall \ell \in [L]$,
\begin{align}\label{eq:mup-feature-speed-formula}
    \hb_{\ell,t} = \Theta_n(1), \hb_{\ell,t+1} - \hb_{\ell,t} = \Theta_n(1), \cL_{t+1}(\hb_{L, t+1}) - \cL_{t}(\hb_{L, t}) = \Theta_n(1);
\end{align}
where $\hb_{\ell, t} \in \RR^{n} = \mathrm{activation}(\Wb^{\ell}_{ t}\hb_{\ell-1, t})$ is the output of the $\ell$-th layer at step $t$ and $\cL$ is the loss.\footnote{We consider models that are feed-forward at any \emph{local} level in this work.} The concrete scheme of $\mup$ vary for different optimizers~\citep{chizat2024feature,chen2025global,bernstein2025deriving}. We consider the scheme for Muon~\citep{bernstein2025deriving} in this work, whose original form is often derived from the so-called spectral conditions.

\subsubsection{Spectral conditions}

Since rigorous analyses under $\mup$ by far usually resort to mathematically involved random matrix machinery, and are restricted to only a constant number of update steps~\citep{yang2021tensor,hajjar2024training,chizat2024infinite}; it has been attractive for practitioners to recover the scaling scheme of $\mup$ via CLT- and LLN-type heuristics~\citep{yang2023spectral}, among which the \emph{spectral conditions}~\citep{yang2023spectral} has been widely deemed as a necessary condition for $\mup$ (consequently, for hyperparameter transfer).

% \section{What is wrong with hard spectral conditions?}
\begin{Desideratum}[Feature learning]~\label{con:feature-learning}
Let $\hb_\ell(\xb)\in\RR^{n_\ell}$ denote the features of input $\xb$ at layer $\ell$ of a neural network, and let $\Delta \hb_\ell(\xb)\in\RR^{n_\ell}$ denote their change after a gradient step. We desire that:
\begin{align*}
    \|\hb_\ell\|_2=\Theta(\sqrt{n_\ell})\text{  and   }\|\Delta\hb_\ell\|_2=\Theta(\sqrt{n_\ell}),\text{   at layers }\ell=1,\cdots,L-1.
\end{align*}
\end{Desideratum}

\begin{Condition}[Spectral scaling]~\label{con:spectral-scaling}
% \textbf{Spectral scaling}. 
Consider applying update $\Delta\Wb^\ell\in\mathbf{R}^{n_{\ell}\times n_{\ell-1}}$ to the weight matrix $\Wb^\ell\in\mathbf{R}^{n_{\ell}\times n_{\ell-1}}$ in $\ell$-th layer. The spectral norms of $\Wb^\ell$ and $\Delta\Wb^\ell$ should satisfy:
\begin{align*}
    \|\Wb^\ell\|_*=\Theta \left( \sqrt{\frac{n_\ell}{n_{\ell-1}}} \right),~\|\Delta\Wb^\ell\|_*=\Theta \left(\sqrt{\frac{n_\ell}{n_{\ell-1}}} \right),~\textnormal{at layers }\ell=1,\ldots,L.
\end{align*}
\end{Condition}

\Cref{con:spectral-scaling} implemented with $\mup$ only guarantees the desired spectral norms of the weight $\Wb$ and update $\bDelta$ across different widths, but \textbf{not} necessarily guarantees the desired spectral norms \emph{across entire time horizon of the training steps}. In other words, because $\mup$ is \textbf{data-independent}, the data-dependent nature of training intuitively may necessitate hard constraints for the training stability over time.

In fact, \citet{yang2023spectral} outlined another plausible way of implementing \Cref{con:spectral-scaling} for the standard parametrization (SP). %\footnote{e.g., certain PyTorch default parametrization.}. 
In particular, \citet[Equations (16) and (17)]{yang2023spectral} implement it via explicit spectral normalization as\footnote{We ignore the momentum here and use the raw gradient instead only to avoid notation clutter.}
\begin{align}
    \bDelta_{t} &\leftarrow \frac{\nabla_{\Wb_{t}} \cL_{t}}{\norm{\nabla_{\Wb_{t}} \cL_{t}}}, \quad \Wb_{t+\frac{1}{2}} \leftarrow \Wb_{t} - \eta \sqrt{\frac{n_\ell}{n_{\ell-1}}}  \bDelta_{t}, \quad \Wb_{t+1} \leftarrow \sigma \sqrt{\frac{n_\ell}{n_{\ell-1}}} \frac{\Wb_{t+\frac{1}{2}}}{\norm{\Wb_{t+\frac{1}{2}}}}. \label{eq:cascade}
\end{align}
Such a two-step spectral normalization approach for SP has a trick issue: as long as $\norm{\Wb_{t+\frac{1}{2}}} \neq \sqrt{n_\ell / n_{\ell-1}}$, the normalization of weight will implicitly shrink or amplify the spectral magnitude of the previously normalized $\bDelta_{t}$ because $\frac{\Wb_{t+\frac{1}{2}}}{\norm{\Wb_{t+\frac{1}{2}}}} = \frac{\Wb_{t} - \eta \sqrt{\frac{n_\ell}{n_{\ell-1}}}  \bDelta_{t}}{\norm{\Wb_{t+\frac{1}{2}}}}$. Therefore, SP equipped with \Cref{eq:cascade} may not perfectly satisfy the spectral conditions for both the weight $\Wb_{t+1}$ and the one-step neat update $\bDelta_{t+1}$ simultaneously. Similar issues also exist for the spectral normalization algorithms discussed in \citep{su2025high,Zhihu2025}.
 
\subsection{Muon}
The Muon optimizer \citep{muon} was proposed to leverage the 2D geometric structure of model parameter matrices during training. Given a momentum coefficient $\mu \in (0,1)$ and a learning rate $\eta_t > 0$, the update rule is defined as:
\begin{align*}\Mb_t &= \mu \Mb_t + \nabla f(\Xb_{t},\bxi_{t}),\\
\mathbf{O}_t &= \text{NewtonSchulz}\left(\Mb_t\right),\\
\Xb_{t+1} &= \Xb_t-\eta_t\mathbf{O}_t.
\end{align*}
Here, $\Xb_t\in \RR^{m\times n}$ is the parameter matrix, and $\Mb_t$ represents the momentum matrix. The core of the update relies on the Newton-Schulz iteration \citep{bernstein2024old}, which is employed to approximate the polar factor $\Ub_t \Vb_t$, where $\Ub_t \bSigma_t \Vb_t$ is the Singular Value Decomposition (SVD) of $\Mb_t$. Moreover, instead of the original momentum $\Mb_t$, a Nesterov-style term $\mu\Mb_t+\nabla f(\Xb_{t},\bxi_{t})$ is usually used in Newton-Schulz iteration in practice. It is worth noting that Muon is designed only for matrix-shaped parameters. Vector-like parameters, including embeddings, normalization layers, and language model heads, are typically optimized by AdamW. 

However, empirical studies \citep{yuan2024mars,semenov2025benchmarking} have reported that the original Muon formulation yields unsatisfactory performance in large-scale language modeling tasks.~\citet{liu2025muon} identified the source of this suboptimal performance as a mismatch between the update magnitudes applied to the different parameter types. To address this disparity, they proposed a variant that rescales the Muon update to match this magnitude with a factor of $0.2\sqrt{\max(m,n)}$ for the weight $\Xb_t \in \RR^{m \times n}$. And this corrected version has demonstrated strong empirical performance on several benchmarks \citep{semenov2025benchmarking,wen2025fantastic}.

\section{The Proposed Algorithm: \algname{}}

\subsection{Formulation: Spectral conditions with spectral constraints}

Suppose $\Wb \in \RR^{n_{\ell} \times n_{\ell-1}}$ for the current step with a spectral norm in compliance with its spectral condition, i.e.,
\begin{align*}
    \norm{\Wb} = S \textcolor{red}{\coloneqq}  \sqrt{n_\ell / n_{\ell-1}},
\end{align*}
which can be achieved either by $\mup$ initialization (with high probability) or explicit normalization. Then our formulation is
\begin{equation}\label{eq:P} % we only need one equation number for this block.
\begin{aligned}
    \max_{\bDelta: \norm{\bDelta} \leq 1} \dotp{\Gb}{\bDelta} \\
    \text{ subject to } \norm{ \Wb - \eta \cdot S\cdot\bDelta } = S,
\end{aligned}
\end{equation}
where $\Gb$ is usually replaced by a momentum estimator and $\eta>0$ is a constant step size that does not scale with the model size. %\footnote{Refer to, e.g., \citet[Muon in four steps]{bernstein2025deriving} for an intuitive introduction of these multiplicative factors.}
% where $\Gb$ is usually replaced by a momentum estimator and $\eta$ is the step size, which is often multiplied by $S$ together with a tunable hyperparameter\footnote{One can set $s_{\mathrm{hyper}} \leftarrow 0.2$ following \citet{liu2025muon} or search it from scratch on the \emph{proxy model}, which is a term coined in \citet{yang2022tensorv}.} $s_\mathrm{hyper}$ to satisfy the spectral condition for feature learning and hyperparameter transfer such that $\eta \leftarrow\eta * S * s_{\text{hyper}}$, which will be taken for granted in all the following discussions.\footnote{Refer to, e.g., \citet[Muon in four steps]{bernstein2025deriving} for an intuitive introduction of these multiplicative factors.}

\subsection{A projection-based solution without spectral normalization}
For the optimization problem in~\eqref{eq:P}, suppose the top singular value $S \coloneqq \sigma_1(\Wb)$ is unique, with the corresponding left and right singular vectors $\ub_1,\vb_1$; which holds almost surely for typical random weights with finite $n_{\ell} \vee n_{\ell-1}$. When the step size $\eta$ satisfies $\eta \cdot S \leq S - \sigma_2(\Wb)$, $\bDelta$ \emph{does not disturb} the top singular direction of $\Wb$, the constraint $\norm{ \Wb - \eta\cdot S\cdot\bDelta }=S$ is satisfied automatically; which is rigorously justified in \Cref{sec:justification}. To ensure this property, it is sufficient (\Cref{prop:justification}) to have the following constraints:
\begin{align}
    \quad \ub_1^{\top} \bDelta = 0, \quad \bDelta \vb_1 = 0.\label{eq:no-disturb}
\end{align} 
Equivalently, $\bDelta$ must lie in the orthogonal subspace:
\begin{align}
    \bDelta = \Pb_{\Ub^{\perp}} \bDelta \Pb_{\Vb^{\perp}}.\label{eq:ortho-subspace}
\end{align}
where $\Pb_{\Ub^{\perp}} = \Ib - \ub_1 \ub_1^{\top}$ is the projection orthogonal to $\ub_1$, and $\Pb_{\Vb^{\perp}} = \Ib - \vb_1 \vb_1^{\top}$ is the projection orthogonal to $\vb_1$. 

With this constraint, $\Wb - \eta\cdot S\cdot \bDelta$ has singular value $S$ along $\ub_1,\vb_1$, while other singular values are $\sigma_i(\Wb)$ perturbed by $\eta\cdot S\cdot \bDelta$. Since $\sigma_i(\Wb)\le \sigma_2(\Wb),\forall i\ge 2$, $\|\bDelta\|=1$ as well as $|\eta  S| \le S-\sigma_2(\Wb)$, therefore, it holds that $\sigma_1(\Wb - \eta\cdot S\cdot \bDelta) = S$, i.e., $\|\Wb - \eta\cdot S\cdot\bDelta\| = S$. 

Therefore, the optimization problem~\eqref{eq:P} is reduced to
\begin{equation}\label{eq:P-reduce} % we only need one equation number for this block.
\begin{aligned}
    &\max_{\bDelta: \norm{\bDelta} \leq 1} \dotp{\Gb}{\bDelta} \\
    &\text{ subject to } \ub_1^\top\bDelta=0, \bDelta\vb_1=0.
\end{aligned}
\end{equation}
By projecting the gradient matrix $\Gb$ to the same orthogonal subspace in~\eqref{eq:ortho-subspace} to attain $\hat{\Gb}:= \Pb_{\Ub^{\perp}} \Gb \Pb_{\Vb^{\perp}}$,~\eqref{eq:P-reduce} becomes:
\begin{equation}\label{eq:P-reduce-compact} % we only need one equation number for this block.
\begin{aligned}
    \max_{\bDelta: \norm{\bDelta} \leq 1} \dotp{\hat{\Gb}}{\bDelta}.
\end{aligned}
\end{equation}
And the optimal update is $\bDelta=\hat{\Ub}\hat{\Vb}^\top$, where $\hat{\Ub}\hat{\bSigma}\hat{\Vb}^\top=\text{SVD}(\hat{\Gb})$ is the SVD decomposition of the projected gradient matrix. 

The the analysis above yields \algname{} as shown in Algorithm~\ref{alg:tbd}, where $\mathrm{msign}(\Wb)$ for some matrix $\Wb$ with singular value decomposition of $\Ub,\bSigma,\Vb^\top=\text{SVD}(\Wb)$ is the matrix sign function defined as $\mathrm{msign}(\Wb):=\Ub\Vb^\top$~\citep{roberts1980linear,denman1976matrix}.
\begin{algorithm}[tb!]
\caption{\algname{}}
\begin{algorithmic}[1]
\setcounter{ALC@unique}{0}
\label{alg:tbd}
    \STATE \textbf{input:} $\Wb_0\in\mathbb{R}^{n_{\ell}\times n_{\ell-1}},\mu,\{\eta_t\}_{\geq1}$
    \STATE Set $\Wb_1\leftarrow\Wb_0$ (initialized following \cite[Parametrization~1]{yang2023spectral}), $\Mb_0\leftarrow\mathbf{0}$
    \FOR {$t=1,$ \textbf{to} $ T$}
        \STATE Compute gradient $\Gb_t=\nabla f_t(\Wb_t)$
        \STATE $\Mb_t=\mu\Mb_{t-1}+\Gb_t$
        \STATE Compute the top left singular vector $\ub_{1,t}$ and right singular vector $\vb_{1,t}$ of $\Wb_t$
        \STATE $\Pb_{\Ub_t^\perp}=\Ib-\ub_{1,t}\ub_{1,t}^\top$, $\Pb_{\Vb_t^\perp}=\Ib-\vb_{1,t}\vb_{1,t}^\top$
        \STATE $\bDelta_t=\mathrm{msign}(\Pb_{\Ub_t^\perp}\Mb_t\Pb_{\Vb_t^\perp})$
        \STATE $\Wb_{t+1}=\Wb_t-\eta_t\cdot S\cdot\bDelta_t$
        %\STATE $\Wb_{t+1} \leftarrow \Wb_{t+1/2} / \norm{\Wb_{t+1/2}} \cdot S$
    \ENDFOR
\end{algorithmic}
\end{algorithm}

For a sufficiently small effective learning rate $\eta$, the condition $\eta S\le S-\sigma_2(\Wb)$ can be satisfied with high probability. Therefore, \Cref{alg:tbd} is solid enough for training with moderately large width \emph{without} explicit spectral-norm normalization, e.g., $\mathtt{hidden\_size} \leq 4096$. However, for weight $\Wb$ with prohibitively large size, $S-\sigma_2(\Wb)$ may vanish to zero, which motivates some practical alleviations discussed below.

\subsection{Practical considerations for models with ultra-large width}

\begin{algorithm}[tb!]
\caption{\algname{} with Direct Rescaling}
\begin{algorithmic}[1]
\setcounter{ALC@unique}{0}
\label{alg:retraction}
    \STATE \textbf{input:} $\Wb_0\in\mathbb{R}^{n_{\ell}\times n_{\ell-1}},\mu,\{\eta_t\}_{t\geq1}$
    \STATE Set $\Wb_1\leftarrow\Wb_0$ (initialized following \cite[Parametrization~1]{yang2023spectral}), $\Mb_0\leftarrow\mathbf{0}$
    \FOR {$t=1,$ \textbf{to} $ T$}
        \STATE Compute gradient $\Gb_t=\nabla f_t(\Wb_t)$
        \STATE $\Mb_t=\mu\Mb_{t-1}+\Gb_t$
        \STATE Compute the top left singular vector $\ub_{1,t}$ and right singular vector $\vb_{1,t}$ of $\Wb_t$ \label{line:compute-top-singular-vectors}
        \STATE $\Pb_{\Ub_t^\perp}=\Ib-\ub_{1,t}\ub_{1,t}^\top$, $\Pb_{\Vb_t^\perp}=\Ib-\vb_{1,t}\vb_{1,t}^\top$
        \STATE $\bDelta_t=\mathrm{msign}(\Pb_{\Ub_t^\perp}\Mb_t\Pb_{\Vb_t^\perp})$
        \STATE $\Wb_{t+\frac{1}{2}}=\Wb_t-\eta_t\cdot S\cdot\bDelta_t$ \label{line:retraction:t+1/2}
        \STATE $\Wb_{t+1} \leftarrow S \cdot \Wb_{t+\frac{1}{2}} / \norm{\Wb_{t+\frac{1}{2}}}$ \label{line:ad-hoc-rescaling-alg2}
    \ENDFOR
\end{algorithmic}
\end{algorithm}

Given \Cref{fact:gap-vanish} that $\sigma_1 \approx \sigma_2$ for some ultra high-dimensional weight matrices under $\mup$ scaling, \Cref{alg:retraction} may serve as an approximate relief when $n_{\ell} \wedge n_{\ell-1} \gg 1$, leveraging a direct spectral rescaling in \Cref{line:ad-hoc-rescaling-alg2}. It is worth noting that \Cref{alg:retraction} is equivalent to \Cref{alg:tbd} if $\eta_t S\leq S - \sigma_2(\Wb_{t})$ or $\norm{\Wb_{t+1/2}} = S$ (in \Cref{line:retraction:t+1/2}), due to spectral cancelations between $\bDelta_{t}$ and $\Wb_{t}$. Even when it is the case, if $\norm{\Wb_{t+1/2}}$ dominates $\norm{\Wb_{t}}$ by a non-negligible margin, the direct rescaling in \Cref{alg:retraction} might make $\norm{\Wb_{t+1} - \Wb_{t}}$ smaller than the desired norm of $\sqrt{n_{\ell} / n_{\ell-1}}$ in the spectral condition of the neat update, which is not likely to hinder the training stability but may affect the perfectness of hyperparameter transfer. Therefore, \Cref{alg:retraction} can be served as an appropriate compromise between ideal principles and efficient implementations when the width is prohibitively large.

\begin{remark}
    Although \Cref{fact:gap-vanish} implies that $\sigma_1 - \sigma_2$ may be small for certain $\Wb$ with a stable rank of $\Omega(n_{\ell} \wedge n_{\ell-1})$ and large enough $n_{\ell} \wedge n_{\ell-1}$, it is worth noting that around the late stage of training (e.g., for $t > T/2$), (1) many relevant weights may have smaller stable ranks compared to their initialization (and thus $\sigma_1 - \sigma_2$ becomes larger with high probability), and (2) $\eta_t$ will be super small if following practically favorable learning rate schedulers~\citep{loshchilov2016sgdr,hu2024minicpm,bergsma2025straight}. Therefore, \Cref{line:ad-hoc-rescaling-alg2} in \Cref{alg:retraction} is not likely to take place too frequently after the very first portion of the training procedure.
\end{remark}

\begin{remark}
    The conceptually standard principle for solving constrained optimization problems \Cref{eq:P,eq:P-reduce} is strong duality, which we outline as another iterative alternative of \Cref{alg:tbd,alg:retraction} under a first-order oracle premise in \Cref{app:strong-duality}.
\end{remark}

\subsection{Should the spectral condition for weight matrices be time-independent?}\label{subsec:perspective-corr}

The $S$ in~\Cref{eq:P} is originally deemed to be time-independent because it is widely \textbf{believed} that the spectral condition $S = \sqrt{n_\ell / n_{\ell-1}}$ for $\Wb$ is crucial for preserving the norm of the feature vectors as well as their update scales~\citep{yang2021tensor,yang2023spectral}. However, for the \textbf{late}\footnote{See \Cref{sec:adamw+mup:cumulative} for an example of the late stage in LLM training (even with, e.g., Adam under $\mup$).} stage of training, e.g., when the number of training steps $T$ satisfies $T \gtrsim \mathrm{poly}(n_{\ell} \wedge n_{\ell-1})$,
\begin{enumerate}[label=\textbf{Issue \arabic*},align=left]
    \item the elements within $\Wb$ may become mutually dependent (due to the accumulation of the impact of data-dependent gradient information), especially when the training token budget is \emph{huge} yet $n_\ell$ and $n_{\ell-1}$ remain \emph{mild}; \label{point:corr}
    \item the activation vector $\hb^{(\ell-1)}$ and $\Wb^{(\ell)}$ may correlate in an intriguing yet complex way.\label{point:alignment-issue}
\end{enumerate}

\begin{remark}
\ref{point:alignment-issue} has been empirically studied in different contexts~\citep{everett2024scaling,haas2025surprising}. However, it is by far still hard to model it in an principled way that is easily implementable for practitioners to remedy this issue. We leave the data-dependent treatment of this issue as a future direction.
\end{remark}

Any of these two issues alone already breaks down the derivation of the spectral conditions~\citep[Condition~1]{yang2023spectral}.
To inspire future work on the refinement of spectral conditions, we initiate an effort \emph{on top of \Cref{alg:tbd}} towards a principled treatment of \ref{point:corr}. In detail, we introduce a minimalist correlation model for $\Wb$, which is promising to estimate \emph{on the fly} (i.e., during the normal training procedure) efficiently.
% in a data-dependent way.

\begin{definition}[$\rho$-correlated weight]\label{def:corr}
    $\Wb \in \RR^{m \times n}$ is\footnote{We denote $m = n_{\ell}$ and $n = n_{\ell-1}$ in the presentation below to avoid notation clutter.} a $\rho$-correlated weight if $\{W_{i,j}\}_{(i,j) \in [m] \times [n]}\iidsim \cN(0, \sigma^2)$ and $\mathrm{corr}(W_{i,j}, W_{k,\ell}) = \rho$ for any $i\neq j$ or $k \neq \ell$.
\end{definition}
Given $\rho \geq 0$, $Z \sim \cN(0, 1)$, $\bPhi \in \RR^{m\times n}$ with $\iid$ standard Gaussian entries, and $\Jb \coloneqq \one_{m}\one_{n}^\top$, we have $\sigma^{-1}\Wb \stackrel{\mathrm{d}}{=} \sqrt{\rho}Z\Jb + \sqrt{1 - \rho} \bPhi$ if and only if $\Wb$ is a $\rho$-correlated weight. Thus, we identify 
\begin{align}
    \Wb \coloneqq \sigma\big( \sqrt{\rho}Z\Jb + \sqrt{1 - \rho} \bPhi \big) \label{eq:identification}
\end{align}
in the analysis below W.L.O.G. To further simplify the presentation, we make the following assumption, which is reasonable for many 2-dimensional weights in modern LLMs like \citet{bi2024deepseek}.
\begin{assumption}\label{assump:fixed-ratio}
    $m/n \equiv c$, where $c \in (0, 1]$ is a constant; i.e., the ratio between the input and output dimensions of $\Wb$ is fixed.
\end{assumption}
\Cref{assump:fixed-ratio} implies $\sqrt{m/n} = \sqrt{c} = \Theta(1)$; under which we make a further \emph{simplification} that, to preserve the elementwise magnitude of the activation vector after multiplied by $\Wb$, it is still desired that $\norm{\Wb} \asymp \sqrt{c}$ for any large $n$ in every iteration, i.e.,
\begin{desideratum}\label{desi:spectral-corr}
  For the $\rho$-correlated weight $\Wb$, $\norm{\Wb} = \Theta_{\PP}(1)$.
\end{desideratum}
\begin{remark}
    In the discussion above, we assume that $\rho \geq 0$, which is indeed a limitation. However, whenever this $\rho$-correlated weight is well-defined, i.e., the covariance matrix of the $mn$ random variables in $\Wb$ is positive semi-definite, we will always have $\rho \gtrsim -(mn)^{-1}$, as detailed in \Cref{sec:discuss-rho>=0}. It also implies that $\rho$ is not very likely to be negative for relatively large $n$ under this phenomenological model.
\end{remark}
 
To achieve \Cref{desi:spectral-corr}, it is vital to carefully track the relative order of $\sigma$ versus $\rho$ in \Cref{def:corr} compared to $n$. Therefore, it is natural to index them by $\sigma_n$ and $\rho_n$ for the $\Wb \in \RR^{m\times n}$ under \Cref{assump:fixed-ratio} in the following analysis.

We analyze $\norm{\Wb}_F$, $\norm{\Wb}$, and the \emph{stable rank} $\mathrm{srank}(\Wb) \coloneqq \norm{\Wb}_F^2 / \norm{\Wb}^2$ as follows, emphasizing their dependency on the scaling of $(\rho_n, \sigma_n)$. The regime of non-vanishing correlation is straightforward:
\begin{align}
    \rho_n \asymp 1 \Longrightarrow\quad \text{ \Cref{desi:spectral-corr} is satisfied if and only iff } \sigma_n \asymp n^{-1}, \label{eq:non-vanishing-corr}
\end{align}
whose proof deferred to \Cref{sec:non-vanishing-corr}. The regime of $\rho_n \ll 1$ is more interesting as shown below.
\begin{proposition}\label{prop:F-norm-vanishing-corr}
    Under \Cref{assump:fixed-ratio}, if $\rho_n \ll 1$, $ { \norm{\Wb}_{F} } / \big({ \sigma_n \sqrt{mn} }\big) \xrightarrow[n\to\infty]{\text{a.s.}} 1$.
\end{proposition}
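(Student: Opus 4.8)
The plan is to establish the almost-sure limit by analyzing $\norm{\Wb}_F^2 = \sigma_n^2 \norm{\sqrt{\rho_n} Z \Jb + \sqrt{1-\rho_n}\,\bPhi}_F^2$ directly from the identification in \eqref{eq:identification}. Expanding the Frobenius norm squared gives three terms:
\begin{align*}
    \norm{\Wb}_F^2 / \sigma_n^2 = \rho_n Z^2 \norm{\Jb}_F^2 + 2\sqrt{\rho_n}\sqrt{1-\rho_n}\, Z \dotp{\Jb}{\bPhi} + (1-\rho_n)\norm{\bPhi}_F^2.
\end{align*}
Here $\norm{\Jb}_F^2 = mn$, $\dotp{\Jb}{\bPhi} = \sum_{i,j}\Phi_{i,j}$ is a mean-zero Gaussian with variance $mn$, and $\norm{\bPhi}_F^2 = \sum_{i,j}\Phi_{i,j}^2$ is a $\chi^2$ random variable with $mn$ degrees of freedom. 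Dividing through by $mn$ (recall $mn \asymp n^2$ under \Cref{assump:fixed-ratio}), the plan is to show each of the three normalized terms converges almost surely, with the dominant contribution coming from $(1-\rho_n)\norm{\bPhi}_F^2 / (mn)$.

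The key steps, in order: (i) By the strong law of large numbers applied to the $mn \to \infty$ i.i.d.\ terms $\Phi_{i,j}^2$, we get $\norm{\bPhi}_F^2 / (mn) \xrightarrow{\text{a.s.}} 1$; since $\rho_n \ll 1$, the factor $(1-\rho_n) \to 1$, so this term contributes $1$ in the limit. One subtlety is that $n$ indexes the problem while the number of summands is $mn$; since $m = cn$ with $c$ fixed, letting $n \to \infty$ forces $mn \to \infty$, so the SLLN along the subsequence $\{mn\}_{n \ge 1}$ applies (the partial sums form a monotone-in-$n$ nested structure, so a.s.\ convergence along this deterministic subsequence follows from a.s.\ convergence of the full sequence of partial sums). (ii) The cross term: $\dotp{\Jb}{\bPhi}/(mn)$ has variance $1/(mn) \to 0$, and in fact $\dotp{\Jb}{\bPhi}/(mn) \xrightarrow{\text{a.s.}} 0$ again by SLLN (it is the sample mean of i.i.d.\ standard Gaussians), so multiplied by the bounded factor $2\sqrt{\rho_n(1-\rho_n)}\,Z = O(\sqrt{\rho_n}) \cdot Z \to 0$ a.s., this term vanishes. (iii) The rank-one term: $\rho_n Z^2 \norm{\Jb}_F^2/(mn) = \rho_n Z^2 \to 0$ a.s.\ since $\rho_n \ll 1$ and $Z^2$ is a fixed finite random variable. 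Combining (i)--(iii), $\norm{\Wb}_F^2/(\sigma_n^2 mn) \xrightarrow{\text{a.s.}} 1$, and taking square roots (using continuity of $\sqrt{\cdot}$ and positivity of the limit) yields $\norm{\Wb}_F / (\sigma_n \sqrt{mn}) \xrightarrow{\text{a.s.}} 1$.

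The main obstacle I anticipate is the bookkeeping around the subsequence issue in step (i): the SLLN gives convergence of $\frac{1}{N}\sum_{k=1}^N \Phi_k^2$ as $N \to \infty$ over all integers, but we need convergence along $N = m_n n = c n^2$. This is handled cleanly because a.s.\ convergence of a sequence implies a.s.\ convergence of any deterministic subsequence — one just fixes the enumeration of the doubly-indexed entries $\Phi_{i,j}$ into a single sequence and observes $\{m_n n\}$ is an increasing sequence of integers tending to infinity. A second minor point to be careful about: $Z$ and $\bPhi$ are independent, so $Z$ is a.s.\ finite and the products $Z \cdot (\text{vanishing})$ genuinely vanish a.s.; and since $\rho_n$ is deterministic, no independence subtleties arise there. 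Everything else is routine application of the strong law, and no concentration inequality or random-matrix input is needed for the Frobenius norm (those will only be needed for the spectral norm in a subsequent result).
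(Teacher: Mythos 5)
Your proposal is correct and takes essentially the same approach as the paper: expand $\|\Wb\|_F^2/\sigma_n^2$ via the identification \eqref{eq:identification} into the rank-one, cross, and Wishart-type terms, apply the SLLN on a common probability space to the latter two, use $\rho_n\to0$ to kill the first, and conclude by continuity of the square root. The paper's proof silently relies on the same nested-enumeration observation you spell out for applying the SLLN along $N_n = m_n n$, so your extra care there is consistent with, not a deviation from, the paper's argument.
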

\Cref{prop:F-norm-vanishing-corr} manifests that $\norm{ \Wb }_F$ behaves consistently in the regime of vanishing correlation, yet $\norm{\Wb}$ behaves distinctly in three different sub-regimes: $\rho_n \ll n^{-1}$, $\rho_n \gg n^{-1}$, and $\rho_n \asymp n^{-1}$.
\begin{proposition}\label{prop:spectral-norm-vanishing-corr}
    Under the same assumptions in \Cref{prop:F-norm-vanishing-corr} and the identification \Cref{eq:identification}:
\begin{align}
    \rho_n \ll n^{-1} &\Longrightarrow\quad \frac{\norm{\Wb}}{\sigma_n(\sqrt{m} + \sqrt{n})} \xrightarrow[n\to\infty]{\text{a.s.}} 1, \label{eq:spectral-ll}\\
    \rho_n \gg n^{-1} &\Longrightarrow\quad \frac{\norm{\Wb}}{\sigma_n\sqrt{mn\rho_n}} \xrightarrow[n\to\infty]{\text{a.s.}} |Z| = \Theta_\PP(1), \label{eq:spectral-gg}\\
    \rho_n \asymp n^{-1} &\Longrightarrow\quad \frac{\norm{\Wb}}{\sigma_n(\sqrt{m} + \sqrt{n})} = O_\PP(1), \label{eq:spectral-asymp-Op}
\end{align}
and in particular, when $\lim_{n\to\infty} n \rho_n = \tau \in (0, +\infty)$,
\begin{align}
    \frac{\norm{\Wb}}{ \sigma_n(\sqrt{m} + \sqrt{n})} \xrightarrow[n\to\infty]{\text{a.s.}} \ind{\{Z^2\tau\sqrt{c} \leq 1\}} + \ind{\{Z^2\tau\sqrt{c} > 1\}} \cdot \frac{\sqrt{(Z^2\tau + 1)(Z^2\tau c + 1)}}{|Z|\big( 1 + \sqrt{c} \big) \sqrt{\tau}} = \Theta_\PP(1). \label{eq:spectral-asymp-exact}
\end{align}
\end{proposition}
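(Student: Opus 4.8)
The plan is to use the representation \eqref{eq:identification}, which exhibits $\Wb/\sigma_n$ as a rank-one spike added to a scaled Gaussian matrix. Writing $\hat\ub = \one_m/\sqrt m$ and $\hat\vb = \one_n/\sqrt n$ (unit vectors) and using $\Jb = \one_m\one_n^\top = \sqrt{mn}\,\hat\ub\hat\vb^\top$, we obtain
\[
\frac{\Wb}{\sigma_n} \stackrel{\mathrm{d}}{=} \theta_n\,\hat\ub\hat\vb^\top + \sqrt{1-\rho_n}\,\bPhi, \qquad \theta_n \coloneqq \sqrt{\rho_n}\,Z\sqrt{mn} = Z\sqrt{c}\,n\sqrt{\rho_n},
\]
where $\bPhi$ has \iid\ standard Gaussian entries and $Z\sim\cN(0,1)$ is independent of $\bPhi$. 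Two facts drive everything: (i) $\|\hat\ub\hat\vb^\top\| = 1$, so the spike has operator norm $|\theta_n|$; and (ii) by the Bai--Yin law, $\|\bPhi\|/(\sqrt m + \sqrt n)\xrightarrow{\mathrm{a.s.}}1$ under \Cref{assump:fixed-ratio}. The dichotomy is governed by $n\rho_n$ (equivalently $m\rho_n$, since $m/n=c$): $|\theta_n|/\|\bPhi\| \asymp |Z|\sqrt{n\rho_n}$.

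For the two extreme regimes I will only need the sandwich $\big|\,\|\Ab\|-\|\Bb\|\,\big|\le\|\Ab+\Bb\|\le\|\Ab\|+\|\Bb\|$ with $\Ab = \sqrt{1-\rho_n}\,\bPhi$ and $\Bb = \theta_n\hat\ub\hat\vb^\top$. If $\rho_n\ll n^{-1}$, then $|\theta_n|/(\sqrt m+\sqrt n) = |Z|\sqrt{c}\,\sqrt{n\rho_n}/(1+\sqrt c)\xrightarrow{\mathrm{a.s.}}0$ (as $Z$ is a fixed a.s.-finite variable), so dividing $\|\Wb\|$ by $\sigma_n(\sqrt m+\sqrt n)$ and squeezing gives \eqref{eq:spectral-ll}. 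If $\rho_n\gg n^{-1}$, I normalize instead by $\sigma_n\sqrt{mn\rho_n} = \sigma_n|\theta_n|/|Z|$; then $\sqrt{1-\rho_n}\,\|\bPhi\|/\sqrt{mn\rho_n}\asymp 1/\sqrt{n\rho_n}\xrightarrow{\mathrm{a.s.}}0$, so the same squeeze yields $\|\Wb\|/(\sigma_n\sqrt{mn\rho_n})\xrightarrow{\mathrm{a.s.}}|Z|$, which is $\Theta_\PP(1)$ because $Z$ is continuous with $\PP(Z=0)=0$; this is \eqref{eq:spectral-gg}. For $\rho_n\asymp n^{-1}$, the upper half of the sandwich alone already gives \eqref{eq:spectral-asymp-Op}: $n\rho_n = O(1)$ forces $|\theta_n|/(\sqrt m+\sqrt n) = O_\PP(1)$ while $\|\bPhi\|/(\sqrt m+\sqrt n)\to 1$.

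The only nontrivial part is the exact limit \eqref{eq:spectral-asymp-exact} when $n\rho_n\to\tau\in(0,\infty)$, for which I will invoke the BBP-type phase transition for finite-rank perturbations of rectangular random matrices (Benaych-Georges--Nadakuditi). Dividing the representation of $\Wb/\sigma_n$ by $\sqrt n$ gives
\[
\frac{\Wb}{\sigma_n\sqrt n}\stackrel{\mathrm{d}}{=}\big(Z\sqrt{c\,n\rho_n}\big)\,\hat\ub\hat\vb^\top + \sqrt{1-\rho_n}\,\frac{\bPhi}{\sqrt n},
\]
where $\bPhi/\sqrt n$ has the Marchenko--Pastur singular spectrum with right edge $1+\sqrt c$, and the spike coefficient satisfies $Z\sqrt{c\,n\rho_n}\to Z\sqrt{c\tau}$. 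Conditioning on $Z$ (independent of $\bPhi$) and treating $Z\sqrt{c\tau}$ as a fixed rank-one perturbation strength, the rectangular BBP theorem gives $\sigma_1\big(Z\sqrt{c\tau}\,\hat\ub\hat\vb^\top + \bPhi/\sqrt n\big)\xrightarrow{\mathrm{a.s.}}1+\sqrt c$ when $(Z\sqrt{c\tau})^2\le\sqrt c$, i.e.\ $Z^2\tau\sqrt c\le1$, and $\xrightarrow{\mathrm{a.s.}}\sqrt{(1+t^2)(t^2+c)/t^2}$ with $t^2 = Z^2c\tau$ otherwise. Rescaling by $\sqrt n/(\sqrt m+\sqrt n) = 1/(1+\sqrt c)$, the below-threshold branch becomes $1$, and the above-threshold branch simplifies via $\sqrt{(1+Z^2c\tau)(Z^2c\tau+c)/(Z^2c\tau)}/(1+\sqrt c) = \sqrt{(1+Z^2\tau)(1+Z^2c\tau)}/(|Z|(1+\sqrt c)\sqrt\tau)$, reproducing exactly the right-hand side of \eqref{eq:spectral-asymp-exact}, which is a.s.\ positive and finite, hence $\Theta_\PP(1)$.

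I expect two technical points in this last step to be the main obstacle. First, to pass from the $n$-dependent coefficient $Z\sqrt{c\,n\rho_n}$ and the factor $\sqrt{1-\rho_n}$ to their limits, I use that each singular value is $1$-Lipschitz in operator norm (Weyl): the gap between $\Wb/(\sigma_n\sqrt n)$ and its limiting-coefficient counterpart is at most $|Z|\,\big|\sqrt{c\,n\rho_n}-\sqrt{c\tau}\big| + \big|\sqrt{1-\rho_n}-1\big|\cdot\|\bPhi/\sqrt n\|\xrightarrow{\mathrm{a.s.}}0$, so the a.s.\ limit is unchanged. Second, the BBP conclusions hold conditionally on $Z$ for a.e.\ $\bPhi$; integrating over $Z$ via Fubini gives the joint a.s.\ statement, which is legitimate because the threshold set $\{Z^2\tau\sqrt c = 1\}$ has $Z$-measure zero and the BBP limit is continuous across it. (That the all-ones singular directions $\hat\ub,\hat\vb$ are admissible perturbation vectors is immediate for Gaussian $\bPhi$ by bi-orthogonal invariance, which lets one rotate the spike to the canonical $\theta\,\mathbf{e}_1\mathbf{e}_1^\top$; no delocalization argument is needed.) The first two regimes, by contrast, are elementary.
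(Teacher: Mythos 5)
Your proposal is correct and follows essentially the same route as the paper: the identification \eqref{eq:identification} exhibits $\Wb/\sigma_n$ as a rank-one all-ones spike plus a scaled $\iid$ Gaussian matrix; the two extreme regimes $\rho_n\ll n^{-1}$ and $\rho_n\gg n^{-1}$ are handled by the Bai--Yin law together with the triangle-inequality sandwich; the $O_\PP(1)$ bound for $\rho_n\asymp n^{-1}$ again follows from the triangle inequality; and the exact critical limit is obtained from the Benaych-Georges--Nadakuditi rectangular BBP theorem. The only cosmetic difference is the normalization at the BBP step (you divide by $\sqrt n$ and use the MP edge $1+\sqrt c$; the paper divides by $\sqrt m+\sqrt n$ and states the limit law on $((1-\sqrt c)/(1+\sqrt c),1)$), and your write-up is somewhat more explicit about two technical points the paper treats implicitly: the Weyl/Lipschitz argument for replacing the $n$-dependent spike coefficient by its limit (the paper isolates this as the remainder $\Cb_n$ with $\|\Cb_n\|=o(1)$ a.s.) and conditioning on $Z$ before applying BBP. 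Both of these are good hygiene and do not change the structure of the argument.
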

\Cref{subsec:proof-F-norm-vanishing-corr} (resp. \Cref{subsec:proof-spectral-norm-vanishing-corr}) details the proof of \Cref{prop:F-norm-vanishing-corr} (resp. \Cref{prop:spectral-norm-vanishing-corr}).
The main take-away of \Cref{prop:F-norm-vanishing-corr,prop:spectral-norm-vanishing-corr,eq:non-vanishing-corr} lies in the regime $\rho_n \gg n^{-1}$, in which we need 
\begin{align}
    \textcolor{red}{\sigma_n \asymp n^{-1}\rho_n^{-1/2}} \label{eq:key-diff}
\end{align}
to achieve \Cref{desi:spectral-corr}; and $\mathrm{srank}(\Wb) = \Theta_\PP(\rho_n^{-1})$ in this case, which is smaller than the common $\mathrm{srank} \sim mn / (\sqrt{m} + \sqrt{n})^2$ in the $\mup$ literature. In contrast, in regimes of $\rho_n \lesssim n^{-1}$, \Cref{prop:spectral-norm-vanishing-corr} indicates that the commonly used $\sigma_n \asymp n^{-1/2}$ is necessary and sufficient for \Cref{desi:spectral-corr}. Therefore, as long as we can efficiently estimate $\rho_n$ or its scaling exponent w.r.t. $n$ on the fly, we may be able to {adapt} $\Wb$ to its current \emph{correlation status}.

\subsubsection{Correlation estimation and its potential utilization}

Empirically, given the weight matrix from a certain checkpoint, we can estimate $\rho_n$ by the \emph{method of moments} in $O(mn)$ time as
\begin{align}
    \hat{\rho}_{n}^{\mathrm{MoM}} \leftarrow \frac{mn \overline{W}^2 - \hat{\sigma}_n^2}{(mn-1)\hat{\sigma}_n^2}, \label{eq:MoM}
\end{align}
where $\overline{W} \leftarrow (mn)^{-1}\sum_{i,j} W_{i,j}$ and $\hat{\sigma}_n^2 \leftarrow (mn)^{-1}\sum_{i,j} W_{i,j}^2$.

Under the belief of $\rho_n = 1 / \poly(n)$, it is possible to fit a linear regression for $\log \hat{\rho}_n$ v.s. $\log n$ across different model widths to determine the scaling exponent of $\rho_n$. Moreover, our analysis \Cref{eq:key-diff} above suggests that it is viable to introduce a threshold $C$ such that when the estimator $\hat{\rho}_{n}^{\mathrm{MoM}, (t)}$ \Cref{eq:MoM} for the current step $t$ exceeds $C \cdot (n^{-1/2} + m^{-1/2})$ for the first time, we scale $\Wb_{t}$ by a factor of $\sqrt{\hat{\rho}_{n}^{\mathrm{MoM}, (t-1)} / \hat{\rho}_{n}^{\mathrm{MoM}, (t)}}$.

\begin{remark}
    The correlation utilization approach outlined above can be more computationally efficient than other iterative approaches (e.g., \Cref{line:ad-hoc-rescaling-alg2} of \Cref{alg:retraction}) for adjusting the spectral status of $\Wb_{t+1}$. In particular, we recall that \Cref{line:ad-hoc-rescaling-alg2} of \Cref{alg:retraction} requires an additional invocation of power (or Lanczos) iteration because it needs the concrete spectral norm computation.
\end{remark}

\section{Discussion}

Our projection-based \algname{} algorithm enforces the required spectral properties of weight matrices throughout the training process in an on-the-fly manner, without imposing overly rigid structural constraints (e.g., orthogonal rows or columns~\citep{modula-docs}). Instead, it allows learned representations to be encoded through correlated features, while still satisfying \Cref{desi:spectral-corr} at each optimization step, even in the late stages of training.

\subsection{Relevance and limitations}

\paragraph{Originality and Relevance.} Beyond the introduction of new algorithms, our work draws on well-established tools from the theory of Gaussian random matrices~\citep{benaych2012singular,speicher2020lecture} and classical point estimation methods, including the method of moments. While these techniques are independently well studied, our contribution lies in their principled integration into the study of spectral conditions under $\mup$, leveraging training-time information across the entire optimization process to enable practical and reliable matrix-based optimizers. 

\paragraph{Limitations.} For \algname{}, we still need explicit spectral normalization for super large-width cases in \Cref{alg:retraction}. Also, the modeling framework in \Cref{subsec:perspective-corr} for alleviating \Cref{point:corr} is far from comprehensive, and a neat and sound mitigation of \Cref{point:alignment-issue} is left as a future work.

\subsection{On the impact of weight decay v.s. explicit constraints}
It is worth noticing that our norm constraint \Cref{eq:P} never rules out the necessity of weight decay (which is not discussed in the current manuscript), namely, weight decay is not necessarily redundant even under explicit weight magnitude normalization: projected gradient methods alone will converge to stationary points if certain stationary points satisfy the norm constraints, but decoupled weight decay introduces a ``shrinkage factor'' $(1 - \eta_t \lambda)$, which may pull the model away from any stationary points during pre-training even when the momentum is already near-vanishing.

%\newpage

\appendix

\section{The Admissible Range of $\eta$}

In this section, we demonstrate that, the next step $t+1$ of \Cref{alg:tbd} automatically satisfies the constraint in \Cref{eq:P} if and only if $\eta_t \cdot S \leq \sigma_1(\Wb_{t}) - \sigma_2(\Wb_t)$ for the current step $t$.

\subsection{$\eta \cdot S < \sigma_{1}(\Wb) \coloneqq S$ is not enough: $\eta \cdot S \le \sigma_1(\Wb) - \sigma_2(\Wb)$ is necessary}

\begin{align*}
 \forall \delta > 0, \eta \coloneqq 0.8 + \delta,  \Wb \coloneqq \begin{bmatrix}
        1 & 0 \\
        0 & 0.2
    \end{bmatrix}, \bDelta \coloneqq \begin{bmatrix}
        0 & 0 \\
        0 & -1 
    \end{bmatrix}; 
    \Ub \coloneqq \Vb \coloneqq \begin{bmatrix}
        1 & 0 \\
        0 & 1
    \end{bmatrix}, \bSigma \coloneqq \Wb.
\end{align*}
Then the singular value decomposition of $\Wb$ is $\Wb = \Ub\Sigma \Vb^\top$, whose $\sigma_1(\Wb) = 1$, $\sigma_2(\Wb) = 0.2$, and $\sigma_1(\Wb) - \sigma_{2}(\Wb) = 0.8$, which means $\eta S < \sigma_1(\Wb)$ but $\eta S > \sigma_1(\Wb) - \sigma_2(\Wb)$; and most importantly:
\begin{align}
    \Wb - \eta \cdot S \cdot\bDelta = \begin{bmatrix}
        1 & 0 \\
        0 & 1 + \delta
    \end{bmatrix},
\end{align}
whose maximal singular value is $1 + \delta > \sigma_{1}(\Wb) \coloneqq S = 1$.

\subsection{$\eta\cdot S\le \sigma_1(\Wb)-\sigma_2(\Wb)$ is sufficient}\label{sec:justification}

\begin{proposition}\label{prop:justification}
Let $\Wb \in \RR^{n_{\ell} \times n_{\ell-1}}$ have its SVD $\Ub \bSigma \Vb^\top$, where $\Ub = [\ub_1, ...]$, $\Vb = [\vb_1, ....]$, and $\bSigma = \mathrm{diag}(\sigma_1, \sigma_2, ...)$ with $S \coloneqq \sigma_1 > \sigma_2$. Suppose $\norm{ \bDelta } = 1$, $\ub_1^\top \bDelta = \zero_{n_{\ell-1}}$, $ \bDelta \vb_1 = \zero_{n_{\ell}}$ and $|\eta S| \leq  \sigma_1 - \sigma_2$, then $\norm{ \Wb - \eta \cdot S \cdot\bDelta } = S$.
\end{proposition}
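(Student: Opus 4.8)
The plan is to reduce $\norm{\Wb - \eta \cdot S \cdot \bDelta}$ to a block-diagonal computation in which the top singular value is decoupled from a residual block whose norm is controlled by the step-size budget. First I would peel off the leading singular triple: write $\Wb = \sigma_1\ub_1\vb_1^\top + \Wb'$ with $\Wb' \coloneqq \sum_{i\ge2}\sigma_i\ub_i\vb_i^\top$, so that $\ub_1^\top\Wb' = \zero^\top$, $\Wb'\vb_1 = \zero$, and $\norm{\Wb'} = \sigma_2$. Then complete $\ub_1$ and $\vb_1$ to orthonormal bases, say via $\Ub_\perp \in \RR^{n_\ell\times(n_\ell-1)}$ and $\Vb_\perp\in\RR^{n_{\ell-1}\times(n_{\ell-1}-1)}$ whose columns span $\ub_1^{\perp}$ and $\vb_1^{\perp}$ respectively, so that $[\ub_1\;\Ub_\perp]$ and $[\vb_1\;\Vb_\perp]$ are orthogonal matrices.

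Next I would conjugate $\Wb - \eta \cdot S \cdot \bDelta$ on the left by $[\ub_1\;\Ub_\perp]^\top$ and on the right by $[\vb_1\;\Vb_\perp]$, which preserves all singular values. The key observation is that the resulting $2\times2$ block matrix is in fact block-diagonal: invoking the no-disturbance constraints \eqref{eq:no-disturb}, i.e.\ $\ub_1^\top\bDelta = \zero^\top$ and $\bDelta\vb_1 = \zero$, together with $\ub_1^\top\Wb' = \zero^\top$ and $\Wb'\vb_1 = \zero$, the $(1,1)$ scalar entry equals $\sigma_1$, both off-diagonal blocks vanish identically, and the $(2,2)$ block is $\Mb'\coloneqq\Ub_\perp^\top(\Wb'-\eta \cdot S \cdot \bDelta)\Vb_\perp$. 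Since the singular values of a block-diagonal matrix are the union of those of its blocks, $\norm{\Wb - \eta \cdot S \cdot \bDelta} = \sigma_1 \vee \norm{\Mb'}$.

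It then remains to check $\norm{\Mb'}\le\sigma_1$. Because multiplication by $\Ub_\perp^\top$ and by $\Vb_\perp$ is non-expansive in operator norm, $\norm{\Mb'}\le\norm{\Wb'-\eta \cdot S \cdot \bDelta}\le\norm{\Wb'}+|\eta S|\,\norm{\bDelta} = \sigma_2 + |\eta S|\le \sigma_2 + (\sigma_1-\sigma_2) = \sigma_1$, where I used $\norm{\bDelta}=1$ and the hypothesis $|\eta S|\le\sigma_1-\sigma_2$. Hence $\norm{\Wb-\eta \cdot S \cdot \bDelta} = \sigma_1 = S$, which is exactly the claim.

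The only step that genuinely needs care is the block computation: one must verify that the off-diagonal blocks are \emph{exactly} zero rather than merely small — which is precisely where the orthogonality constraints \eqref{eq:no-disturb} enter — and keep track that it is $\Wb'$, not $\Wb$, that is confined to the orthogonal complement, so the residual block inherits the \emph{second} singular value $\sigma_2$ and leaves slack $\sigma_1-\sigma_2$ to absorb the $\eta S$ perturbation. Beyond this bookkeeping I do not anticipate any real obstacle; the conclusion follows from a single application of the triangle inequality.
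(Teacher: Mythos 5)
Your proof is correct and rests on the same essential ideas as the paper's: the constraints $\ub_1^\top\bDelta = \zero$, $\bDelta\vb_1 = \zero$ decouple the top singular direction, after which the triangle inequality together with $\norm{\bDelta}=1$ and $|\eta S|\le\sigma_1-\sigma_2$ controls the residual. The paper argues at the vector level (decomposing an arbitrary unit $\xb$ as $\alpha\vb_1+\yb$ and bounding $\norm{(\Wb-\eta S\bDelta)\xb}_2^2\le\sigma_1^2$), while you package the same decoupling as a block-diagonalization under orthogonal conjugation — a mild presentational variant that has the small advantage of delivering the lower bound $\norm{\Wb-\eta S\bDelta}\ge\sigma_1$ for free via the $(1,1)$ block, whereas the paper's argument as written only establishes the upper bound and leaves the (trivial) lower bound implicit.
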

\begin{proof}
 $\forall \xb \in \RR^{n_{\ell-1}}$ with $\norm{\xb}_2 = 1$, we decompose it into $\xb = \alpha \vb_1 + \yb$,
where $\yb \in \mathrm{span}\big( \{\vb_i\}_{i > 1} \big)$ and $\norm{\yb}_2^2 = 1 - \alpha^2$. Note that 
\begin{align*}
    \norm{(\Wb - \eta \cdot S \cdot \bDelta)\xb}_2^2  &= \norm{ \alpha \sigma_1 \ub_1 + (\Wb - \eta \cdot S \cdot \bDelta)\yb }_2^2 \\
    &= \alpha^2\sigma_1^2 + \norm{(\Wb - \eta \cdot S \cdot \bDelta)\yb}_2^2 \\
    &\leq \alpha^2\sigma_1^2 + (\norm{\Wb\yb}_2 + |\eta S|\norm{\bDelta\yb}_2 )^2 \\
    &\leq \alpha^2\sigma_1^2 + (\sigma_2 + |\eta S|)^2 \norm{\yb}_2^2 \\
    &\leq \alpha^2\sigma_1^2 + \sigma_1^2  (1-\alpha^2) = \sigma_1^2,
\end{align*}
where the first equality is due to $\bDelta \vb_1 = \zero$, the second equality follows from $(\Wb \yb) \in \mathrm{span}\big( \{\ub_i\}_{i > 1} \big)$ and $\ub_1^\top \bDelta = \zero \Rightarrow (\bDelta \yb) \perp \ub_1$; the first inequality is effectively the triangle inequality, the second inequality is by $\yb \perp \vb_1$ and $\norm{\bDelta} = 1$, and the last inequality follows from $|\eta S| \leq \sigma_1 - \sigma_2$ and $\norm{\yb}=  \sqrt{1 - \alpha^2}$.
\end{proof}

\subsection{$\sigma_1(\Wb) - \sigma_2(\Wb)$ vanishes as width goes large}

\begin{fact}[{See, e.g., \citealt[Introduction]{bloemendal2013limits}}]\label{fact:gap-vanish}
    Let $\Ab = (A_{ij})_{(i,j) \in [m] \times [n]}$ be a random matrix with $\iid$ real entries such that $\EE A_{11} = 0$, $\EE A_{11}^2 = 1$, and $\EE A_{11}^4 < \infty$; suppose $m/n \equiv c$ as $n \to \infty$, i.e., $m/n$ is always a constant $c$ that does not vary with $n$; let $\Wb \coloneqq n^{-1/2}\Ab$ and $\sigma_{k} \coloneqq \sigma_k(\Wb)$ be the $k$-th singular value of $\Wb$, then as $n \to \infty$, $(\sigma_1 - \sigma_2) \stackrel{\mathrm{a.s.}}{\longrightarrow} 0$.
\end{fact}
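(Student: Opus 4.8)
The plan is to reduce \Cref{fact:gap-vanish} to two classical facts about Wishart-type spectra: the Marchenko--Pastur theorem (weak convergence of the empirical spectral distribution) and the Bai--Yin theorem (almost sure convergence of the extreme eigenvalue to the spectral edge). Write $p \coloneqq m \wedge n$ and let $\lambda_1 \geq \lambda_2 \geq \cdots \geq \lambda_p \geq 0$ be the eigenvalues of $\Wb\Wb^\top$ if $m \leq n$, or of $\Wb^\top\Wb$ if $m > n$; by construction $\lambda_k = \sigma_k^2$ for all $k \leq p$. Since $\sqrt{a} - \sqrt{b} \leq \sqrt{a-b}$ whenever $a \geq b \geq 0$, it suffices to show $\lambda_1 - \lambda_2 \xrightarrow{\text{a.s.}} 0$, and for that I would prove the stronger statement that \emph{both} $\lambda_1$ and $\lambda_2$ converge almost surely to the common limit $(1+\sqrt{c})^2$, the upper edge of the relevant Marchenko--Pastur law.

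For $\lambda_1$: because $\EE A_{11} = 0$, the data matrix has no rank-one mean component, so there is no outlier (``spike'') separating from the bulk --- in contrast to the $\rho$-correlated model of \Cref{def:corr}, where precisely such a mean term would create a gap. The Bai--Yin theorem, which uses exactly the hypotheses $\EE A_{11}^2 = 1$ and $\EE A_{11}^4 < \infty$ (the fourth-moment assumption being indispensable here), then gives $\lambda_1 \xrightarrow{\text{a.s.}} (1+\sqrt{c})^2$. For $\lambda_2$ the upper bound $\limsup_n \lambda_2 \leq \limsup_n \lambda_1 = (1+\sqrt{c})^2$ is immediate; for the matching lower bound I would invoke the Marchenko--Pastur theorem (valid under only the second-moment assumption): the empirical distribution of $\{\lambda_k\}_{k\leq p}$ converges weakly, almost surely, to a Marchenko--Pastur law $\mu_c$, plus possibly an atom at $0$ when $m \neq n$ which is irrelevant for the right edge. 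Since $\mu_c$ has a density vanishing like a square root at its upper edge --- in particular no atom there, but positive mass $\delta(\varepsilon)$ in every left-neighbourhood $[(1+\sqrt{c})^2 - \varepsilon, \infty)$ --- for each fixed $\varepsilon > 0$ a strictly positive fraction of the $\lambda_k$'s lies in $[(1+\sqrt{c})^2 - \varepsilon, \infty)$ for all large $n$, almost surely; as $p \to \infty$ this count eventually exceeds $2$, forcing $\lambda_2 \geq (1+\sqrt{c})^2 - \varepsilon$. Intersecting the almost sure events over a sequence $\varepsilon \downarrow 0$ yields $\liminf_n \lambda_2 \geq (1+\sqrt{c})^2$, hence $\lambda_2 \xrightarrow{\text{a.s.}} (1+\sqrt{c})^2$, and combining with the elementary inequality above gives $0 \leq \sigma_1 - \sigma_2 \leq \sqrt{\lambda_1 - \lambda_2} \xrightarrow{\text{a.s.}} 0$.

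I expect the only genuine subtlety --- the \textbf{main obstacle} --- to be the $\lambda_2$ lower bound: one must argue that the \emph{second}-largest eigenvalue, not merely the largest, reaches the spectral edge. This is not a rigidity / Tracy--Widom phenomenon and needs no extra moment assumptions; it is a soft consequence of the square-root vanishing of the Marchenko--Pastur density at the edge, i.e.\ of the fact that $\Theta(p)$ eigenvalues pile up arbitrarily close to $(1+\sqrt{c})^2$. Everything else is routine assembly of the Bai--Yin and Marchenko--Pastur theorems, together with the observation that the possible point mass at $0$, and the choice of which Gram matrix ($\Wb\Wb^\top$ versus $\Wb^\top\Wb$) one normalizes, never affect the top of the spectrum.
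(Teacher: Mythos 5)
The paper does not prove this statement: it is stated as a bare fact and attributed to the introduction of \citet{bloemendal2013limits}, so there is no in-paper argument to compare your proposal against. Your self-contained derivation is correct and uses exactly the right two classical ingredients. The Bai--Yin theorem (this is precisely the step that consumes the finite fourth-moment hypothesis) gives $\lambda_1 \to (1+\sqrt{c})^2$ almost surely, while the almost-sure weak convergence of the empirical spectral distribution of the (smaller) Gram matrix to the Marchenko--Pastur law $\mu_c$ --- which is absolutely continuous with no atom at the right edge, so $(1+\sqrt{c})^2-\varepsilon$ is a continuity point --- forces a $\Theta(p)$ fraction, hence eventually at least two, of the $\lambda_k$'s above $(1+\sqrt{c})^2 - \varepsilon$ for every fixed $\varepsilon > 0$. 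Intersecting these almost-sure events over a countable $\varepsilon \downarrow 0$ pins $\lambda_2 \to (1+\sqrt{c})^2$ almost surely as well. Your elementary bound $0 \le \sigma_1-\sigma_2 \le \sqrt{\lambda_1-\lambda_2}$ (or, more simply, continuity of $\sqrt{\cdot}$ applied to the two common-limit eigenvalues) then closes the argument, and your remarks that the choice of Gram matrix and the possible zero eigenvalues of the larger Gram matrix never affect the right spectral edge are both accurate.
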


\section{A Conceptual Alternative}\label{app:strong-duality}

\Cref{alg:tbd,alg:retraction} both require the computation of the top singular vectors, whose cost is tolerable if implemented via approximation algorithms in numerical linear algebra, such as \emph{power iteration} and \emph{Lanczos iteration}~\citep{trefethen2022numerical}. Moreover, it is possible to approximately solve \Cref{eq:P} based on strong duality and solve for it iteratively based on a first-order oracle iteratively as follows.

We still work with $\sigma_1 > \sigma_2$ and {relax} \Cref{eq:P-reduce} to
\begin{equation}\label{eq:P-relaxed} % we only need one equation number for this block.
\begin{aligned}
    &\max_{\bDelta: \norm{\bDelta} \leq 1} \dotp{\Gb}{\bDelta} \\
    &\text{ subject to } \ub_1^\top\bDelta\vb_1=0.
\end{aligned}
\end{equation}
By strong duality~\citep{beck2017first}, we equivalently formulate the dual problem of \Cref{eq:P-relaxed} as
\begin{align}
    \min_{\nu \in \RR} \max_{\bDelta: \norm{\bDelta} = 1}   \dotp{\Gb + \nu \ub_1\vb_1^\top}{\bDelta }, \label{eq:D}
\end{align}
the inner problem of which is well-known (See, e.g., \citealt{bernstein2024old}) to have
\begin{align}
    \bDelta(\nu) \coloneqq \mathrm{msign}(\Gb + \nu \ub_1\vb_1^\top) \label{eq:def-of-Delta-nu}
\end{align}
as a maximizer and $\norm{\Gb + \nu \ub_1\vb_1^\top}_{*}$ as its maximum. Consequently, it suffices to obtain $\nu_{\min} \in \argmin_{\nu} \norm{\Gb + \nu \ub_1\vb_1^\top}_{*}$ and return $\bDelta(\nu_{\min})$ as a solution. Recall that $F(\nu) \coloneqq \norm{\Gb + \nu \ub_1\vb_1^\top}_{*}$ is convex in $\nu$~\citep{beck2017first}, whose subdifferential is obtained by the chain rule~\citep[Theorem~3.43]{beck2017first} and \Cref{fact:nuclear-subdiff} as
\begin{align*}
    \partial F(\nu) = \{ \dotp{\ub_1\vb_1^\top}{\Ab\Bb^\top + \Cb}: \Ab^\top \Cb = \zero, \Cb \Bb = \zero, \norm{\Cb} \leq 1 \},
\end{align*}
where $\Ab\bGamma\Bb^\top$ is the SVD of $\Gb + \nu \ub_1\vb_1^\top$. In particular, one subgradient at $\nu$ could be
\begin{align}
    \dotp{\ub_1\vb_1^\top}{\bDelta(\nu)} \in \partial F(\nu). \label{eq:subgradient-oracle}
\end{align}
\Cref{eq:def-of-Delta-nu,eq:subgradient-oracle} entail that we can build in practice an approximate first-order oracle for $F(\nu)$ via the standard approximation routine (Newton-Schulz, see, e.g., \citealt{bernstein2024old}) of $\mathrm{msign}$. The rest is to invoke subgradient descent to solve for $\nu_{\min} \in \argmin_{\nu \in \RR} F(\nu)$.

\begin{remark}
    \Cref{eq:P-relaxed} is another proxy primal problem of \Cref{eq:P}, which can also be interpreted through the aspect of subgradient approximation. In particular, the convexity of $\eta S \mapsto \norm{\Wb - \eta \cdot S \cdot\bDelta}$ implies $\norm{\Wb - \eta \cdot S \cdot \bDelta} \geq \norm{\Wb} - \eta S\dotp{\ub_1\vb_1^\top}{\bDelta}$, where we utilize $(\ub_1\vb_1^\top) \in \partial \norm{\Wb}$ following \Cref{fact:spectral-subdiff}. The induction hypothesis $\norm{\Wb} = S$ and the constraint $\norm{\Wb - \eta \cdot S \cdot \bDelta} = S$ then indicate $\ub_1^\top\bDelta \vb_1 \geq 0$; and thus \Cref{eq:P-relaxed} can be viewed as an approximate relaxation of \Cref{eq:P} under the belief that the subgradient approximation is nearly an equality (which is convincing when, e.g., $\eta$ is relatively small).
\end{remark}

\section{Auxiliary Lemmas}

We recall two folklore on subdifferential, both of which are elaborated in, e.g., \citet{watson1992characterization}.

\begin{fact}\label{fact:nuclear-subdiff}
The subdifferential of the nuclear norm $\norm{\cdot}_*$ evaluated at $\Mb$ is
\begin{align*}
    \partial \norm{\Mb}_{*} = \{ \Ub \Vb^\top + \Wb : \Ub^\top \Wb = \zero, \Wb\Vb = \zero, \norm{\Wb} \leq 1\}
\end{align*}
if the SVD of $\Mb$ is $\Ub \bSigma \Vb^\top$. In particular, $\mathrm{msign}(\Mb) \in \partial \norm{\Mb}_{*}$.
\end{fact}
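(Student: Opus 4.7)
The plan is to invoke the standard dual-norm characterization of subgradients of a norm: for any norm $\norm{\cdot}$ with dual norm $\norm{\cdot}^{\circ}$,
\begin{align*}
\partial \norm{\Mb} = \{\Gb : \norm{\Gb}^{\circ} \leq 1,~ \dotp{\Gb}{\Mb} = \norm{\Mb}\}.
\end{align*}
Since the dual of $\norm{\cdot}_{*}$ under the Frobenius inner product is the spectral norm, it suffices to show that the right-hand side of Fact~\ref{fact:nuclear-subdiff} coincides with $\{\Gb : \norm{\Gb} \leq 1,~ \dotp{\Gb}{\Mb} = \norm{\Mb}_{*}\}$. The claim $\mathrm{msign}(\Mb) \in \partial \norm{\Mb}_{*}$ is then immediate by taking $\Wb = \zero$ in the resulting characterization.

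For the $\supseteq$ direction, given $\Gb = \Ub\Vb^\top + \Wb$ with $\Ub^\top\Wb = \zero$, $\Wb\Vb = \zero$, and $\norm{\Wb} \leq 1$, the column spaces of $\Ub\Vb^\top$ and $\Wb$ are mutually orthogonal, and so are their row spaces. Consequently $\Gb$ becomes block-diagonal in the orthonormal bases $[\Ub \mid \Ub_\perp]$ on the left and $[\Vb \mid \Vb_\perp]$ on the right, so $\norm{\Gb} = \max(1, \norm{\Wb}) \leq 1$; a direct trace computation using $\Ub^\top\Ub = \Ib$, $\Vb^\top\Vb = \Ib$, and $\Wb^\top\Ub = \zero$ yields $\dotp{\Gb}{\Mb} = \tr(\bSigma) = \norm{\Mb}_{*}$.

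The $\subseteq$ direction is the more delicate one. Given $\Gb$ with $\norm{\Gb} \leq 1$ and $\dotp{\Gb}{\Mb} = \norm{\Mb}_{*}$, I would first expand $\dotp{\Gb}{\Mb} = \sum_i \sigma_i (\Ub^\top \Gb \Vb)_{ii}$ and use $|(\Ub^\top \Gb \Vb)_{ii}| \leq \norm{\Ub^\top \Gb \Vb} \leq 1$ to deduce $(\Ub^\top \Gb \Vb)_{ii} = 1$ for every $i$ in the support of $\bSigma$. An elementary lemma---any matrix of spectral norm at most $1$ with all diagonal entries equal to $1$ must be the identity, via Cauchy--Schwarz applied rowwise---then yields $\Ub^\top \Gb \Vb = \Ib$. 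The main obstacle is ruling out the off-diagonal blocks $\Ub^\top \Gb \Vb_\perp$ and $\Ub_\perp^\top \Gb \Vb$ in the block representation of $\Gb$ in the augmented bases; here I would exploit the spectral bound sharply by feeding a unit vector of the form $\Vb\vb$ into $\Gb$ to obtain $\Ub\vb + \Ub_\perp (\Ub_\perp^\top \Gb \Vb)\vb$, whose squared norm is $1 + \norm{(\Ub_\perp^\top \Gb \Vb)\vb}_2^2$ and must be at most $\norm{\Gb}^2 \leq 1$. This forces $\Ub_\perp^\top \Gb \Vb = \zero$, and the symmetric argument applied to $\Gb^\top$ gives $\Ub^\top \Gb \Vb_\perp = \zero$. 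Setting $\Wb \coloneqq \Ub_\perp(\Ub_\perp^\top \Gb \Vb_\perp)\Vb_\perp^\top$ delivers the required decomposition with $\norm{\Wb} \leq \norm{\Gb} \leq 1$.
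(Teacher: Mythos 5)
Your proof is correct, and it supplies something the paper itself does not: the paper states \Cref{fact:nuclear-subdiff} as folklore and defers entirely to \citet{watson1992characterization} without giving an argument. Your route---reducing to the dual-norm characterization $\partial\norm{\Mb}_* = \{\Gb : \norm{\Gb} \leq 1,\ \dotp{\Gb}{\Mb} = \norm{\Mb}_*\}$ and then matching that set against the stated decomposition---is essentially the same derivation Watson carries out, so there is no methodological divergence to speak of; what you buy is self-containedness. Both directions check out: the $\supseteq$ inclusion via the block-diagonalization in the bases $[\Ub \mid \Ub_\perp]$, $[\Vb \mid \Vb_\perp]$ is clean, and in the $\subseteq$ direction the two key steps (forcing $\Ub^\top\Gb\Vb = \Ib$ from the diagonal-entries lemma, and killing the off-diagonal blocks by feeding $\Vb\vb$ into $\Gb$ and exploiting $\norm{\Gb\Vb\vb}_2^2 = 1 + \norm{(\Ub_\perp^\top\Gb\Vb)\vb}_2^2 \leq 1$) are both valid.

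One caveat worth making explicit, since it affects the statement as much as your proof: the argument only deduces $(\Ub^\top\Gb\Vb)_{ii} = 1$ for indices $i$ with $\sigma_i > 0$, as you note. The characterization as written is therefore only correct under the convention that $\Ub$ and $\Vb$ collect the singular vectors associated with \emph{nonzero} singular values (equivalently, that $\Mb$ has full rank); if zero singular values are included in the SVD, the corresponding diagonal entries of $\Ub^\top\Gb\Vb$ are unconstrained and the right-hand side of the fact would need to be enlarged accordingly. This imprecision is inherited from the paper's statement (and propagates to the claim $\mathrm{msign}(\Mb) \in \partial\norm{\Mb}_*$ for rank-deficient $\Mb$), so it is not a gap in your reasoning, but your write-up should state the rank convention up front rather than leaving it implicit in the phrase ``in the support of $\bSigma$.''
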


\begin{fact}\label{fact:spectral-subdiff}
The subdifferential of the spectral norm $\norm{\cdot}$ evaluated at $\Mb$ is
\begin{align*}
    \partial  \norm{\Mb} = \{ \Qb: \norm{\Mb} = \dotp{\Mb}{\Qb}, \norm{\Qb}_{*} \leq 1 \}.
\end{align*}
\end{fact}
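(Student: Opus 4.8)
The plan is to characterize $\partial\norm{\Mb}$ directly from the definition of the subdifferential of a convex function, using the fact that the spectral norm is the support function of the nuclear-norm ball. First I would recall that $\norm{\cdot}$ and $\norm{\cdot}_*$ are dual norms, so that $\norm{\Mb} = \max_{\norm{\Qb}_* \le 1} \dotp{\Mb}{\Qb}$, i.e. $\norm{\cdot}$ is the support function $\sigma_{B_*}$ of the unit ball $B_* \coloneqq \{\Qb : \norm{\Qb}_* \le 1\}$. A standard fact in convex analysis (see, e.g., \citealt[Example~3.34]{beck2017first}) is that the subdifferential of a support function $\sigma_C$ at a point $\Mb$ equals the exposed face of $C$ in direction $\Mb$, namely $\partial\sigma_C(\Mb) = \argmax_{\Qb \in C}\dotp{\Mb}{\Qb} = \{\Qb \in C : \dotp{\Mb}{\Qb} = \sigma_C(\Mb)\}$. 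Applying this with $C = B_*$ gives exactly $\partial\norm{\Mb} = \{\Qb : \norm{\Qb}_* \le 1,\ \dotp{\Mb}{\Qb} = \norm{\Mb}\}$, which is the claimed set.

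If one prefers a self-contained argument, I would instead verify the two inclusions from the subgradient inequality $\norm{\Nb} \ge \norm{\Mb} + \dotp{\Nb - \Mb}{\Qb}$ for all $\Nb$. For ``$\supseteq$'': given $\Qb$ with $\norm{\Qb}_* \le 1$ and $\dotp{\Mb}{\Qb} = \norm{\Mb}$, the duality $\dotp{\Nb}{\Qb} \le \norm{\Nb}\norm{\Qb}_* \le \norm{\Nb}$ yields $\norm{\Nb} \ge \dotp{\Nb}{\Qb} = \norm{\Mb} + \dotp{\Nb - \Mb}{\Qb}$, so $\Qb \in \partial\norm{\Mb}$. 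For ``$\subseteq$'': if $\Qb \in \partial\norm{\Mb}$, then plugging $\Nb = 2\Mb$ and $\Nb = \zero$ (equivalently $\Nb = t\Mb$, $t \ge 0$) into the subgradient inequality and combining forces $\dotp{\Mb}{\Qb} = \norm{\Mb}$; and plugging arbitrary $\Nb$ with $\norm{\Nb}$ small, together with positive homogeneity, gives $\dotp{\Nb}{\Qb} \le \norm{\Nb}$ for all $\Nb$, which by the definition of the dual norm is exactly $\norm{\Qb}_* \le 1$.

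I do not anticipate a genuine obstacle here, since this is a folklore characterization; the only mild care needed is the duality pairing $(\norm{\cdot}, \norm{\cdot}_*)$ — recorded in the Notation paragraph via $\norm{\Mb}_* = \max_{\norm{\Wb}\le 1}\dotp{\Mb}{\Wb}$ and the corresponding reverse identity — and being precise that $\partial$ of a support function is its exposed face rather than something larger. Since the paper merely cites this as a known fact (attributing it to \citealt{watson1992characterization}), a one- or two-line pointer to the support-function argument suffices, and the explicit two-inclusion verification above can be relegated to a footnote or omitted entirely.
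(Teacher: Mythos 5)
Your proof is correct and matches the standard argument: the paper itself offers no proof, citing \citet{watson1992characterization}, which establishes this via exactly the dual-norm / support-function characterization you invoke. Your self-contained two-inclusion verification is a faithful unpacking of that one-liner and is also correct.
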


\section{Detailed Discussions on the Correlation Model}\label{sec:corr-discussion}

\subsection{On the non-negativeness of $\rho$}\label{sec:discuss-rho>=0}

We omit the subscript (in $n$) for $\sigma$ and $\rho$ in this subsection and let $K \coloneqq mn$ to avoid notational clutter. In \Cref{def:corr}, the covariance matrix $\bSigma = \sigma^2 \big( (1-\rho)\Ib_{K} + \rho \one_{K}\one_{K}^\top)\big)$ for all random variables in $\Wb$ must satisfy $\bSigma \succeq \zero_{K \times K}$. Note that the eigenvalue of $\bSigma$ corresponding to all vectors orthogonal to $\one$ is $\sigma^2(1-\rho) \geq 0$, and the eigenvalue corresponding to $\one$ is $\sigma^2\big( (1-\rho) + \rho K\big)$; which implies that $\bSigma \succeq \zero \Leftrightarrow (1-\rho) + \rho K \geq 0 \Leftrightarrow \rho \geq -(K-1)^{-1}$. Therefore, \Cref{def:corr} implicitly constrains $\rho$ to be nearly non-negative for sufficiently large $n$.

\subsection{Proof of \Cref{eq:non-vanishing-corr}}\label{sec:non-vanishing-corr}

In this case of non-vanishing correlation, i.e., $\rho_n \asymp 1$, the pairwise weight correlation is bounded (from below) away from zero for any sufficiently large $n$, which is a ``super''-correlated regime. Recall the identification in \Cref{subsec:perspective-corr} of $\sigma_n^{-1} \Wb \coloneqq  \sqrt{\rho_n}Z\Jb + \sqrt{1 - \rho_n} \bPhi$, under which
\begin{align*}
 \sigma_n \sqrt{\rho_n} |Z| \norm{\Jb} - \sigma_n \sqrt{1 - \rho_n} \norm{\bPhi}  \leq  \norm{\Wb} \leq \sigma_n \sqrt{\rho_n} |Z| \norm{\Jb} + \sigma_n \sqrt{1 - \rho_n} \norm{\bPhi}.
\end{align*}
Under \Cref{assump:fixed-ratio}, $\norm{\Jb} = \Theta(n)$, and $n^{-1/2}{\norm{\bPhi}} \in \Theta_\PP(1) \ni |Z|$ (again due to the classic Bai-Yin theorem, see, e.g., \citealt{speicher2020lecture}); since $0 < \liminf_{n\to\infty} \rho_n \leq \limsup_{n\to\infty} \rho_n \leq 1$, we conclude that:  \Cref{desi:spectral-corr} is satisfied if and only if $\sigma_n \asymp n^{-1}$.

\subsection{Proof of \Cref{prop:F-norm-vanishing-corr}}\label{subsec:proof-F-norm-vanishing-corr}
We put all randomness in one probability space to establish pointwise convergence. In detail, let $\{\phi_{ij}\}_{i,j\geq1}$ be an (countably) infinite two-dimensional array of $\iid$ random variables following $\cN(0, 1)$. Explicitly write $m_n \coloneqq cn$ and $N_n \coloneqq m_n \cdot n$ to emphasize the dependency on $n$, and denote the submatrix $\{\phi_{ij}\}_{1\leq i \leq m_n, 1 \leq j \leq n}$ by $\bPhi^{(n)}$, and denote $\one_{m_n}\one_{n}^\top$ by $\Jb^{(n)}$. Our identification \Cref{eq:identification} thus reads $\sigma_n^{-1}\Wb^{(n)} = \sqrt{\rho_n}Z \Jb^{(n)} + \sqrt{1 - \rho_n}\bPhi^{(n)}$, for which
\begin{align}
    \frac{\norm{\Wb^{(n)}}_F^2}{N_n\sigma_n^2} &= \rho_nZ^2 + 2Z\sqrt{\rho_n(1-\rho_n)} \cdot \frac{\bigdotp{ \Jb^{(n)} }{ \bPhi^{(n)} }}{N_n} + (1 - \rho_n) \frac{\norm{\bPhi^{(n)}}_F^2}{N_n}  \notag\\
    &=  \underbrace{\rho_nZ^2}_{\text{(i)}} + 2Z\sqrt{\rho_n(1-\rho_n)} \cdot \underbrace{\frac{\bigdotp{ \Jb^{(n)} }{ \bPhi^{(n)} }}{N_n}}_{\text{(ii)}} + (1 - \rho_n) \underbrace{N_n^{-1} \sum_{i=1}^{m_n}\sum_{j=1}^{n} \phi_{ij}^2}_{\text{(iii)}}. \notag
\end{align}
Recall that $\rho_n \ll 1$ means $\rho_n \stackrel{n\to\infty}{\longrightarrow} 0$, and $Z \sim \cN(0, 1) \in \Theta_\PP(1)$. Also, the Strong Law of Large Numbers gives $\text{(ii)} \stackrel{\mathrm{a.s.}}{\longrightarrow} 0$ and $\text{(iii)} \stackrel{\mathrm{a.s.}}{\longrightarrow} 1$. Therefore, the continuous mapping theorem asserts that $\norm{\Wb^{(n)}}_F / \big( \sigma_n\sqrt{N_n} \big) \stackrel{\mathrm{a.s.}}{\longrightarrow} 1$.

\subsection{Proof of \Cref{prop:spectral-norm-vanishing-corr}}\label{subsec:proof-spectral-norm-vanishing-corr}
Our setup of the probability space and the premise of $\rho_n \ll 1$ are the same as those in \Cref{subsec:proof-F-norm-vanishing-corr}.

\paragraph{Case of $\rho_n \ll n^{-1}$.}
\begin{align*}
    \frac{\Wb}{\sigma_n\big( \sqrt{m} + \sqrt{n} \big)} &= Z \underbrace{\frac{\sqrt{\rho_n}}{\sqrt{m} + \sqrt{n}} \one_{m}\one_{n}^\top}_{\Ab_n} + \underbrace{\sqrt{1 - \rho_n}}_{\to 1} \cdot \underbrace{\frac{\bPhi}{\sqrt{m} + \sqrt{n}}}_{\Bb_n}.
\end{align*}
$\norm{\one_{m}\one_{n}^\top} = \sqrt{mn}$ and $\rho_n \ll n^{-1}$ imply $\norm{\Ab_n} \ll 1$. The Bai-Yin theorem gives $\norm{\Bb_n} \stackrel{\mathrm{a.s.}}{\longrightarrow} 1$. Also, the triangle inequality asserts
\begin{align*}
    \sqrt{1 - \rho_n}\norm{\Bb_n} - |Z|\norm{\Ab_n} \leq \mathrm{LHS} \leq \sqrt{1 - \rho_n}\norm{\Bb_n} + |Z|\norm{\Ab_n},
\end{align*}
for which we pass through the limit to obtain \Cref{eq:spectral-ll} by noticing that $\PP(|Z| < \infty) = 1$.

\paragraph{Case of $\rho_n \gg n^{-1}$.}
\begin{align*}
    \frac{{\Wb}}{\sigma_n\sqrt{mn\rho_n}} &= Z \cdot \frac{\one_{m}\one_{n}^\top}{\sqrt{mn}} + \underbrace{\sqrt{\frac{1 - \rho_n}{\rho_n}}}_{\ll \sqrt{n}} \frac{\bPhi}{\sqrt{mn}} = Z \cdot \frac{\one_{m}\one_{n}^\top}{\sqrt{mn}} + \frac{\bPhi}{\sqrt{m}} \cdot o(1),
\end{align*}
where $\norm{ \one_{m}\one_{n}^\top / \sqrt{mn} } = 1$, and the Bai-Yin theorem gives $m^{-1/2}{\norm{\bPhi}} \stackrel{\mathrm{a.s.}}{\longrightarrow} 1 + c^{-1/2}$. Thus, the triangle inequality yields
\begin{align*}
  |Z| - m^{-1/2}{\norm{\bPhi}} \cdot o(1)  \leq \mathrm{LHS} \leq |Z| + m^{-1/2}{\norm{\bPhi}} \cdot o(1),
\end{align*}
for which we pass through the limit to obtain \Cref{eq:spectral-gg}.

\paragraph{Case of $\rho_n \asymp n^{-1}$.} \Cref{eq:spectral-asymp-Op} again follows from a standard triangle-inequality argument with $\norm{\one_{m}\one_{n}^\top} = \sqrt{mn}$ and the Bai-Yin theorem. For the exact result in \Cref{eq:spectral-asymp-exact}, note that
\begin{align*}
    \frac{\Wb}{\sigma_n\big( \sqrt{m} + \sqrt{n} \big)} &= \sqrt{\rho_n}Z \frac{\one_m\one_n^\top}{\sqrt{m} + \sqrt{n}} + \sqrt{1 - \rho_n} \frac{\bPhi}{\sqrt{m} + \sqrt{n}} \\
    &= \sqrt{1  -\rho_n} \frac{\bPhi}{\sqrt{m} + \sqrt{n}} + \frac{Z\sqrt{n\rho_n}}{(1+c^{1/2})} c^{1/2} \big( m^{-1/2}\one_m \big) \big(n^{-1/2}\one_n^\top\big) \\
    &= \underbrace{\sqrt{1  -\rho_n} \frac{\bPhi}{\sqrt{m} + \sqrt{n}}}_{\Xb_n} + \frac{Z\sqrt{\tau}}{(1+c^{-1/2})} (mn)^{-1/2}\one_m\one_n^\top + \underbrace{\frac{Z\big( \sqrt{n\rho_n}-\sqrt{\tau} \big)}{(1+c^{-1/2})}  (mn)^{-1/2}\one_m\one_n^\top }_{\Cb_n}.
\end{align*}
Since we additionally assume $n\rho_n \to \tau \in (0, +\infty)$, $\norm{\Cb_n} \stackrel{\mathrm{a.s.}}{=} o(1)$. The quarter-circle law (also known as the Marchenko–Pastur Law)~\citep{bai2010spectral} asserts that the spectral distribution of the singular values of $\Xb_n$ converges weakly to $\mu(\ud s)$ given by
\begin{align*}
    \mu(\ud s) = \frac{\sqrt{4c - \big( (1+\sqrt{c})^2s^2 - 1 - c \big)^2}}{\pi c s} \ind{ \Big\{ s \in \Big(\frac{1-\sqrt{c}}{1+\sqrt{c}}, 1 \Big) \Big\} } \ud s.
\end{align*}
Also note that $Z$ is independent of $\bPhi$ and $\one_m\one_n^\top$ serves as a rank-one perturbation, a direct application of \citet[Theorem~2.8]{benaych2012singular}, in particular, \citet[Section~3.1]{benaych2012singular}, yields
\begin{align*}
    \mathrm{LHS} \stackrel{\mathrm{a.s.}}{\longrightarrow} \begin{cases}
        1, & |Z|c^{1/4}\tau \leq 1, \\
        \frac{\sqrt{(Z^2\tau + 1)(Z^2\tau c + 1)}}{|Z|\big( 1 + \sqrt{c} \big) \sqrt{\tau}}, & |Z|c^{1/4}\tau > 1;
    \end{cases}
\end{align*}
which is equivalent to \Cref{eq:spectral-asymp-exact}.

\section{On the Dominating Token Budget Threshold for Adam under $\mup$}\label{sec:adamw+mup:cumulative}

Recall that \href{https://huggingface.co/docs/transformers/main/model_doc/nanochat#transformers.NanoChatConfig.initializer_range}{\texttt{initializer\_range}} controls the standard deviation of weight initialization (which can be, e.g., $0.02$ in certain open-source libraries) of the proxy model for AdamW under $\mup$, and $n$ is the width, i.e., the input dimension of an arbitrary two-dimensional weight $\Wb$. For Adam under $\mup$, if we ignore weight decay and approximate Adam using the vanilla SignSGD, and assume that $\{\bDelta_t\}_{t=1}^{T}$ are roughly in the same direction (or from similar distributions). Suppose $\eta / 2 \approx T^{-1}\sum_{t=1}^{T} \eta_t$ (like cosine \texttt{lr} scheduler), where $\eta$ is the peak \texttt{lr}. Under these simplifications, the cumulative neat update will dominate the initialization if the total number of updates, $T$, satisfies
\begin{align}
    \frac{\mathtt{initializer\_range}}{\sqrt{n}} \leq 0.5\eta \frac{\mathtt{base\_width}}{n} \cdot T \Longleftrightarrow T \geq 2\eta^{-1}  \sqrt{n} \cdot \frac{\mathtt{initializer\_range}}{\mathtt{base\_width}}.
\end{align}
Therefore, the dominating token budget threshold (ignoring gradient accumulation, etc.) is roughly $\mathtt{batch\_size} \cdot 2\eta^{-1}  \sqrt{n} \cdot \frac{\mathtt{initializer\_range}}{\mathtt{base\_width}}$ for Adam under $\mup$. The heuristics in this section ignores the impact of weight decay, which is itself a tricky factor in practice~\citep{fan2025robust,kosson2025weight}; and is thus left as another interesting future direction.

\bibliographystyle{ims}
\bibliography{ref}

@article{yang2023spectral,
  title={A spectral condition for feature learning},
  author={Yang, Greg and Simon, James B and Bernstein, Jeremy},
  journal={arXiv preprint arXiv:2310.17813},
  year={2023}
}

@article{liu2025muon,
  title={Muon is scalable for LLM training},
  author={Liu, Jingyuan and Su, Jianlin and Yao, Xingcheng and Jiang, Zhejun and Lai, Guokun and Du, Yulun and Qin, Yidao and Xu, Weixin and Lu, Enzhe and Yan, Junjie and others},
  journal={arXiv preprint arXiv:2502.16982},
  year={2025}
}

@article{jordan2024muon,
  title={Muon: An optimizer for hidden layers in neural networks},
  author={Jordan, Keller and Jin, Yuchen and Boza, Vlado and You, Jiacheng and Cesista, Franz and Newhouse, Laker and Bernstein, Jeremy},
  journal={Cited on},
  pages={10},
  year={2024}
}

@inproceedings{yang2021tensor,
  title={Tensor programs iv: Feature learning in infinite-width neural networks},
  author={Yang, Greg and Hu, Edward J},
  booktitle={International Conference on Machine Learning},
  pages={11727--11737},
  year={2021},
  organization={PMLR}
}

@article{haas2025surprising,
  title={On the Surprising Effectiveness of Large Learning Rates under Standard Width Scaling},
  author={Haas, Moritz and Bordt, Sebastian and von Luxburg, Ulrike and Vankadara, Leena Chennuru},
  journal={arXiv preprint arXiv:2505.22491},
  year={2025}
}

@inproceedings{yuan2024mars,
  title={MARS: Unleashing the Power of Variance Reduction for Training Large Models},
  author={Yuan, Huizhuo and Liu, Yifeng and Wu, Shuang and Gu, Quanquan and others},
  year={2025},
  booktitle={Forty-second International Conference on Machine Learning}
}

@article{semenov2025benchmarking,
  title={Benchmarking Optimizers for Large Language Model Pretraining},
  author={Semenov, Andrei and Pagliardini, Matteo and Jaggi, Martin},
  journal={arXiv preprint arXiv:2509.01440},
  year={2025}
}

@misc{muon,
  author       = {Keller Jordan and Yuchen Jin and Vlado Boza and You Jiacheng and
                  Franz Cecista and Laker Newhouse and Jeremy Bernstein},
  title        = {Muon: An optimizer for hidden layers in neural networks},
  year         = {2024},
  url          = {https://kellerjordan.github.io/posts/muon/}
}

@article{bernstein2024old,
  title={Old Optimizer, New Norm: An Anthology},
  author={Bernstein, Jeremy and Newhouse, Laker},
  journal={arXiv preprint arXiv:2409.20325},
  year={2024}
}

@article{wen2025fantastic,
  title={Fantastic Pretraining Optimizers and Where to Find Them},
  author={Wen, Kaiyue and Hall, David and Ma, Tengyu and Liang, Percy},
  journal={arXiv preprint arXiv:2509.02046},
  year={2025}
}

@article{adagrad,
  title={Adaptive subgradient methods for online learning and stochastic optimization.},
  author={Duchi, John and Hazan, Elad and Singer, Yoram},
  journal={Journal of machine learning research},
  volume={12},
  number={7},
  year={2011}
}

@inproceedings{Adamw,
  author       = {Ilya Loshchilov and
                  Frank Hutter},
  title        = {Decoupled Weight Decay Regularization},
  booktitle    = {7th International Conference on Learning Representations, {ICLR} 2019,
                  New Orleans, LA, USA, May 6-9, 2019},
  year         = {2019},
}

@inproceedings{adam,
  author       = {Diederik P. Kingma and
                  Jimmy Ba},
  editor       = {Yoshua Bengio and
                  Yann LeCun},
  title        = {Adam: {A} Method for Stochastic Optimization},
  booktitle    = {3rd International Conference on Learning Representations, {ICLR} 2015,
                  San Diego, CA, USA, May 7-9, 2015, Conference Track Proceedings},
  year         = {2015},
}

@inproceedings{gupta2018shampoo,
  title={Shampoo: Preconditioned stochastic tensor optimization},
  author={Gupta, Vineet and Koren, Tomer and Singer, Yoram},
  booktitle={International Conference on Machine Learning},
  pages={1842--1850},
  year={2018},
  organization={PMLR}
}

@article{erdogdu2015convergence,
  title={Convergence rates of sub-sampled Newton methods},
  author={Erdogdu, Murat A and Montanari, Andrea},
  journal={Advances in Neural Information Processing Systems},
  volume={28},
  year={2015}
}

@article{gonen2015faster,
  title={Faster sgd using sketched conditioning},
  author={Gonen, Alon and Shalev-Shwartz, Shai},
  journal={arXiv preprint arXiv:1506.02649},
  year={2015}
}

@inproceedings{ishikawaparameterization,
  title={On the Parameterization of Second-Order Optimization Effective towards the Infinite Width},
  author={Ishikawa, Satoki and Karakida, Ryo},
  booktitle={The Twelfth International Conference on Learning Representations},
  year={2024}
}

@inproceedings{martens2015optimizing,
  title={Optimizing neural networks with kronecker-factored approximate curvature},
  author={Martens, James and Grosse, Roger},
  booktitle={International conference on machine learning},
  pages={2408--2417},
  year={2015},
  organization={PMLR}
}

@article{shah2025practical,
  title={Practical efficiency of muon for pretraining},
  author={Shah, Ishaan and Polloreno, Anthony M and Stratos, Karl and Monk, Philip and Chaluvaraju, Adarsh and Hojel, Andrew and Ma, Andrew and Thomas, Anil and Tanwer, Ashish and Shah, Darsh J and others},
  journal={arXiv preprint arXiv:2505.02222},
  year={2025}
}

@article{bloemendal2013limits,
  title={Limits of spiked random matrices I},
  author={Bloemendal, Alex and Vir{\'a}g, B{\'a}lint},
  journal={Probability Theory and Related Fields},
  volume={156},
  number={3},
  pages={795--825},
  year={2013},
  publisher={Springer}
}

@inproceedings{horvath2021hyperparameter,
  title={Hyperparameter transfer learning with adaptive complexity},
  author={Horv{\'a}th, Samuel and Klein, Aaron and Richt{\'a}rik, Peter and Archambeau, C{\'e}dric},
  booktitle={International conference on artificial intelligence and statistics},
  pages={1378--1386},
  year={2021},
  organization={PMLR}
}

@article{perrone2018scalable,
  title={Scalable hyperparameter transfer learning},
  author={Perrone, Valerio and Jenatton, Rodolphe and Seeger, Matthias W and Archambeau, C{\'e}dric},
  journal={Advances in neural information processing systems},
  volume={31},
  year={2018}
}

@inproceedings{yogatama2014efficient,
  title={Efficient transfer learning method for automatic hyperparameter tuning},
  author={Yogatama, Dani and Mann, Gideon},
  booktitle={Artificial intelligence and statistics},
  pages={1077--1085},
  year={2014},
  organization={PMLR}
}

@article{snoek2012practical,
  title={Practical bayesian optimization of machine learning algorithms},
  author={Snoek, Jasper and Larochelle, Hugo and Adams, Ryan P},
  journal={Advances in neural information processing systems},
  volume={25},
  year={2012}
}

@inproceedings{snoek2015scalable,
  title={Scalable bayesian optimization using deep neural networks},
  author={Snoek, Jasper and Rippel, Oren and Swersky, Kevin and Kiros, Ryan and Satish, Nadathur and Sundaram, Narayanan and Patwary, Mostofa and Prabhat, Mr and Adams, Ryan},
  booktitle={International conference on machine learning},
  pages={2171--2180},
  year={2015},
  organization={PMLR}
}

@inproceedings{jamieson2016non,
  title={Non-stochastic best arm identification and hyperparameter optimization},
  author={Jamieson, Kevin and Talwalkar, Ameet},
  booktitle={Artificial intelligence and statistics},
  pages={240--248},
  year={2016},
  organization={PMLR}
}

@inproceedings{akiba2019optuna,
  title={Optuna: A next-generation hyperparameter optimization framework},
  author={Akiba, Takuya and Sano, Shotaro and Yanase, Toshihiko and Ohta, Takeru and Koyama, Masanori},
  booktitle={Proceedings of the 25th ACM SIGKDD international conference on knowledge discovery \& data mining},
  pages={2623--2631},
  year={2019}
}

@inproceedings{he2015delving,
  title={Delving deep into rectifiers: Surpassing human-level performance on imagenet classification},
  author={He, Kaiming and Zhang, Xiangyu and Ren, Shaoqing and Sun, Jian},
  booktitle={Proceedings of the IEEE international conference on computer vision},
  pages={1026--1034},
  year={2015}
}

@inproceedings{glorot2010understanding,
  title={Understanding the difficulty of training deep feedforward neural networks},
  author={Glorot, Xavier and Bengio, Yoshua},
  booktitle={Proceedings of the thirteenth international conference on artificial intelligence and statistics},
  pages={249--256},
  year={2010},
  organization={JMLR Workshop and Conference Proceedings}
}

@article{jacot2018neural,
  title={Neural tangent kernel: Convergence and generalization in neural networks},
  author={Jacot, Arthur and Gabriel, Franck and Hongler, Cl{\'e}ment},
  journal={Advances in neural information processing systems},
  volume={31},
  year={2018}
}

@article{chizat2018global,
  title={On the global convergence of gradient descent for over-parameterized models using optimal transport},
  author={Chizat, Lenaic and Bach, Francis},
  journal={Advances in neural information processing systems},
  volume={31},
  year={2018}
}

@article{mei2018mean,
  title={A mean field view of the landscape of two-layer neural networks},
  author={Mei, Song and Montanari, Andrea and Nguyen, Phan-Minh},
  journal={Proceedings of the National Academy of Sciences},
  volume={115},
  number={33},
  pages={E7665--E7671},
  year={2018},
  publisher={National Academy of Sciences}
}

@article{sirignano2020mean,
  title={Mean field analysis of neural networks: A law of large numbers},
  author={Sirignano, Justin and Spiliopoulos, Konstantinos},
  journal={SIAM Journal on Applied Mathematics},
  volume={80},
  number={2},
  pages={725--752},
  year={2020},
  publisher={SIAM}
}

@article{rotskoff2022trainability,
  title={Trainability and accuracy of artificial neural networks: An interacting particle system approach},
  author={Rotskoff, Grant and Vanden-Eijnden, Eric},
  journal={Communications on Pure and Applied Mathematics},
  volume={75},
  number={9},
  pages={1889--1935},
  year={2022},
  publisher={Wiley Online Library}
}

@article{duchi2011adaptive,
  title={Adaptive subgradient methods for online learning and stochastic optimization.},
  author={Duchi, John and Hazan, Elad and Singer, Yoram},
  journal={Journal of machine learning research},
  volume={12},
  number={7},
  year={2011}
}

@article{riabinin2025gluon,
  title={Gluon: Making Muon \& Scion Great Again!(Bridging Theory and Practice of LMO-based Optimizers for LLMs)},
  author={Riabinin, Artem and Shulgin, Egor and Gruntkowska, Kaja and Richt{\'a}rik, Peter},
  journal={arXiv preprint arXiv:2505.13416},
  year={2025}
}

@article{yang2022tensor,
  title={Tuning large neural networks via zero-shot hyperparameter transfer},
  author={Yang, Ge and Hu, Edward and Babuschkin, Igor and Sidor, Szymon and Liu, Xiaodong and Farhi, David and Ryder, Nick and Pachocki, Jakub and Chen, Weizhu and Gao, Jianfeng},
  journal={Advances in Neural Information Processing Systems},
  volume={34},
  pages={17084--17097},
  year={2021}
}

@book{beck2017first,
  title={First-order methods in optimization},
  author={Beck, Amir},
  year={2017},
  publisher={SIAM}
}

@article{yang2022tensorv,
  title={Tensor programs v: Tuning large neural networks via zero-shot hyperparameter transfer},
  author={Yang, Greg and Hu, Edward J and Babuschkin, Igor and Sidor, Szymon and Liu, Xiaodong and Farhi, David and Ryder, Nick and Pachocki, Jakub and Chen, Weizhu and Gao, Jianfeng},
  journal={arXiv preprint arXiv:2203.03466},
  year={2022}
}

@article{team2025kimi,
  title={Kimi K2: Open Agentic Intelligence},
  author={Team, Kimi and Bai, Yifan and Bao, Yiping and Chen, Guanduo and Chen, Jiahao and Chen, Ningxin and Chen, Ruijue and Chen, Yanru and Chen, Yuankun and Chen, Yutian and others},
  journal={arXiv preprint arXiv:2507.20534},
  year={2025}
}

@misc{Zai2025GLM45,
    author = {Z.ai},
    title = {GLM-4.5: Reasoning, Coding, and Agentic Abililties},
    year = {2025},
    month = {July},
    howpublished = {https://z.ai/blog/glm-4.5},
    note = {Accessed: 2025-07-31}
}

@article{fan2025robust,
  title={Robust Layerwise Scaling Rules by Proper Weight Decay Tuning},
  author={Fan, Zhiyuan and Liu, Yifeng and Zhao, Qingyue and Yuan, Angela and Gu, Quanquan},
  journal={arXiv preprint arXiv:2510.15262},
  year={2025}
}

@article{filatov2025optimal,
  title={Optimal scaling needs optimal norm},
  author={Filatov, Oleg and Wang, Jiangtao and Ebert, Jan and Kesselheim, Stefan},
  journal={arXiv preprint arXiv:2510.03871},
  year={2025}
}

@article{li2025predictable,
  title={Predictable Scale: Part I--Optimal Hyperparameter Scaling Law in Large Language Model Pretraining},
  author={Li, Houyi and Zheng, Wenzhen and Hu, Jingcheng and Wang, Qiufeng and Zhang, Hanshan and Wang, Zili and Xuyang, Shijie and Fan, Yuantao and Zhou, Shuigeng and Zhang, Xiangyu and others},
  journal={arXiv e-prints},
  pages={arXiv--2503},
  year={2025}
}

@article{hoffmann2022training,
  title={Training compute-optimal large language models},
  author={Hoffmann, Jordan and Borgeaud, Sebastian and Mensch, Arthur and Buchatskaya, Elena and Cai, Trevor and Rutherford, Eliza and Casas, Diego de Las and Hendricks, Lisa Anne and Welbl, Johannes and Clark, Aidan and others},
  journal={arXiv preprint arXiv:2203.15556},
  year={2022}
}

@article{kaplan2020scaling,
  title={Scaling laws for neural language models},
  author={Kaplan, Jared and McCandlish, Sam and Henighan, Tom and Brown, Tom B and Chess, Benjamin and Child, Rewon and Gray, Scott and Radford, Alec and Wu, Jeffrey and Amodei, Dario},
  journal={arXiv preprint arXiv:2001.08361},
  year={2020}
}

@article{krajewski2024scaling,
  title={Scaling laws for fine-grained mixture of experts},
  author={Krajewski, Jakub and Ludziejewski, Jan and Adamczewski, Kamil and Pi{\'o}ro, Maciej and Krutul, Micha{\l} and Antoniak, Szymon and Ciebiera, Kamil and Kr{\'o}l, Krystian and Odrzyg{\'o}{\'z}d{\'z}, Tomasz and Sankowski, Piotr and others},
  journal={arXiv preprint arXiv:2402.07871},
  year={2024}
}

@article{zhao2023survey,
  title={A survey of large language models},
  author={Zhao, Wayne Xin and Zhou, Kun and Li, Junyi and Tang, Tianyi and Wang, Xiaolei and Hou, Yupeng and Min, Yingqian and Zhang, Beichen and Zhang, Junjie and Dong, Zican and others},
  journal={arXiv preprint arXiv:2303.18223},
  volume={1},
  number={2},
  year={2023}
}

@article{bi2024deepseek,
  title={Deepseek llm: Scaling open-source language models with longtermism},
  author={Bi, Xiao and Chen, Deli and Chen, Guanting and Chen, Shanhuang and Dai, Damai and Deng, Chengqi and Ding, Honghui and Dong, Kai and Du, Qiushi and Fu, Zhe and others},
  journal={arXiv preprint arXiv:2401.02954},
  year={2024}
}

@article{dey2024sparse,
  title={Sparse maximal update parameterization: A holistic approach to sparse training dynamics},
  author={Dey, Nolan and Bergsma, Shane and Hestness, Joel},
  journal={Advances in Neural Information Processing Systems},
  volume={37},
  pages={33836--33862},
  year={2024}
}

@inproceedings{everett2024scaling,
  title={Scaling Exponents Across Parameterizations and Optimizers},
  author={Everett, Katie E and Xiao, Lechao and Wortsman, Mitchell and Alemi, Alexander A and Novak, Roman and Liu, Peter J and Gur, Izzeddin and Sohl-Dickstein, Jascha and Kaelbling, Leslie Pack and Lee, Jaehoon and others},
  booktitle={International Conference on Machine Learning},
  pages={12666--12700},
  year={2024},
  organization={PMLR}
}

@article{pethick2025training,
  title={Training deep learning models with norm-constrained lmos},
  author={Pethick, Thomas and Xie, Wanyun and Antonakopoulos, Kimon and Zhu, Zhenyu and Silveti-Falls, Antonio and Cevher, Volkan},
  journal={arXiv preprint arXiv:2502.07529},
  year={2025}
}

@article{yang2023tensor,
  title={Tensor programs ivb: Adaptive optimization in the infinite-width limit},
  author={Yang, Greg and Littwin, Etai},
  journal={arXiv preprint arXiv:2308.01814},
  year={2023}
}

@inproceedings{blakeu2025,
  title={{u-{$\mu$}{P}: The Unit-Scaled Maximal Update Parametrization}},
  author={Blake, Charlie and Eichenberg, Constantin and Dean, Josef and Balles, Lukas and Prince, Luke Yuri and Deiseroth, Bj{\"o}rn and Cruz-Salinas, Andres Felipe and Luschi, Carlo and Weinbach, Samuel and Orr, Douglas},
  booktitle={The Thirteenth International Conference on Learning Representations},
  year={2025}
}

@misc{meta2024llama4,
  author       = {{Meta AI}},
  title        = {Llama 4: Advancing Multimodal Intelligence},
  year         = {2024},
  month        = {apr},
  url          = {https://ai.meta.com/blog/llama-4-multimodal-intelligence/},
  note         = {Accessed: 2025-04-10},
  howpublished = {\url{https://ai.meta.com/blog/llama-4-multimodal-intelligence/}},
}

@inproceedings{haas2024effective,
  title={Effective sharpness aware minimization requires layerwise perturbation scaling},
  author={Haas, Moritz and Xu, Jin and Cevher, Volkan and Vankadara, Leena Chennuru},
  booktitle={High-dimensional Learning Dynamics 2024: The Emergence of Structure and Reasoning},
  year={2024}
}

@inproceedings{TensorProgramVI,
  author       = {Greg Yang and
                  Dingli Yu and
                  Chen Zhu and
                  Soufiane Hayou},
  title        = {Tensor Programs {VI:} Feature Learning in Infinite Depth Neural Networks},
  booktitle    = {The Twelfth International Conference on Learning Representations,
                  {ICLR} 2024, Vienna, Austria, May 7-11, 2024},
  year         = {2024},
}

@article{hajjar2024training,
  title={Training integrable parameterizations of deep neural networks in the infinite-width limit},
  author={Hajjar, Karl and Chizat, L{\'e}na{\"\i}c and Giraud, Christophe},
  journal={Journal of Machine Learning Research},
  volume={25},
  number={196},
  pages={1--130},
  year={2024}
}

@article{liu2025mars,
  title={MARS-M: When Variance Reduction Meets Matrices},
  author={Liu, Yifeng and Yuan, Angela and Gu, Quanquan},
  journal={arXiv preprint arXiv:2510.21800},
  year={2025}
}

@book{trefethen2022numerical,
  title={Numerical linear algebra},
  author={Trefethen, Lloyd N and Bau, David},
  year={2022},
  publisher={SIAM}
}

@article{watson1992characterization,
  title={Characterization of the subdifferential of some matrix norms},
  author={Watson, G Alistair},
  journal={Linear Algebra Appl},
  volume={170},
  number={1},
  pages={33--45},
  year={1992}
}

@article{loshchilov2016sgdr,
  title={Sgdr: Stochastic gradient descent with warm restarts},
  author={Loshchilov, Ilya and Hutter, Frank},
  journal={arXiv preprint arXiv:1608.03983},
  year={2016}
}

@article{hu2024minicpm,
  title={Minicpm: Unveiling the potential of small language models with scalable training strategies},
  author={Hu, Shengding and Tu, Yuge and Han, Xu and He, Chaoqun and Cui, Ganqu and Long, Xiang and Zheng, Zhi and Fang, Yewei and Huang, Yuxiang and Zhao, Weilin and others},
  journal={arXiv preprint arXiv:2404.06395},
  year={2024}
}

@article{bergsma2025straight,
  title={Straight to zero: Why linearly decaying the learning rate to zero works best for LLMs},
  author={Bergsma, Shane and Dey, Nolan and Gosal, Gurpreet and Gray, Gavia and Soboleva, Daria and Hestness, Joel},
  journal={arXiv preprint arXiv:2502.15938},
  year={2025}
}

@article{bernstein2025deriving,
  title={Deriving muon},
  author={Bernstein, Jeremy},
  journal={URL: https://jeremybernste. in/writing/deriving-muon},
  year={2025}
}

@article{chizat2024feature,
  title={The feature speed formula: a flexible approach to scale hyper-parameters of deep neural networks},
  author={Chizat, L{\'e}na{\"\i}c and Netrapalli, Praneeth},
  journal={Advances in Neural Information Processing Systems},
  volume={37},
  pages={62362--62383},
  year={2024}
}

@article{chizat2024infinite,
  title={Infinite-width limit of deep linear neural networks},
  author={Chizat, L{\'e}na{\"\i}c and Colombo, Maria and Fern{\'a}ndez-Real, Xavier and Figalli, Alessio},
  journal={Communications on Pure and Applied Mathematics},
  volume={77},
  number={10},
  pages={3958--4007},
  year={2024},
  publisher={Wiley Online Library}
}

@article{speicher2020lecture,
  title={Lecture Notes on" Random Matrices"},
  author={Speicher, Roland},
  journal={arXiv preprint arXiv:2009.05157},
  year={2020}
}

@article{kosson2025weight,
  title={Weight Decay may matter more than muP for Learning Rate Transfer in Practice},
  author={Kosson, Atli and Welborn, Jeremy and Liu, Yang and Jaggi, Martin and Chen, Xi},
  journal={arXiv preprint arXiv:2510.19093},
  year={2025}
}

@misc{modula-docs,
   author  = {Jeremy Bernstein},
   title   = {The Modula Docs},
   url     = {https://docs.modula.systems/},
   year    = 2025
}

@book{bai2010spectral,
  title={Spectral analysis of large dimensional random matrices},
  author={Bai, Zhidong and Silverstein, Jack W},
  volume={20},
  year={2010},
  publisher={Springer}
}

@article{chen2025global,
  title={Global Convergence and Rich Feature Learning in $ L $-Layer Infinite-Width Neural Networks under $\mu $ P Parametrization},
  author={Chen, Zixiang and Yang, Greg and Zhao, Qingyue and Gu, Quanquan},
  journal={arXiv preprint arXiv:2503.09565},
  year={2025}
}

@article{benaych2012singular,
  title={The singular values and vectors of low rank perturbations of large rectangular random matrices},
  author={Benaych-Georges, Florent and Nadakuditi, Raj Rao},
  journal={Journal of Multivariate Analysis},
  volume={111},
  pages={120--135},
  year={2012},
  publisher={Elsevier}
}

@online{kexuefm-11416,
        title={Muon Optimizer Guide: Quick Start and Key Details},
        author={Jianlin Su},
        year={2025},
        month={Nov},
        url={https://kexue.fm/archives/11416},
}

@article{roberts1980linear,
  title={Linear model reduction and solution of the algebraic Riccati equation by use of the sign function},
  author={Roberts, John Douglas},
  journal={International Journal of Control},
  volume={32},
  number={4},
  pages={677--687},
  year={1980},
  publisher={Taylor \& Francis}
}

@article{denman1976matrix,
  title={The matrix sign function and computations in systems},
  author={Denman, Eugene D and Beavers Jr, Alex N},
  journal={Applied mathematics and Computation},
  volume={2},
  number={1},
  pages={63--94},
  year={1976},
  publisher={Elsevier}
}

@misc{Zhihu2025, 
title={Spectral MuP++: The steepest descent method combining Muon and MuP.}, 
url={https://zhuanlan.zhihu.com/p/1967383208107680447}, 
journal={zhihu.com}, 
author={Zhihu}, 
year={2025}, 
month={Nov}
}

@misc{su2025high,
        title={Higher-order MuP: A simpler yet more sophisticated spectral condition scaling method.},
        author={Jianlin Su},
        year={2025},
        month={March},
        url={https://kexue.fm/archives/10795},
}

@article{lau2025polargrad,
  title={PolarGrad: A Class of Matrix-Gradient Optimizers from a Unifying Preconditioning Perspective},
  author={Lau, Tim Tsz-Kit and Long, Qi and Su, Weijie},
  journal={arXiv preprint arXiv:2505.21799},
  year={2025}
}

\end{document}